\documentclass{article}

\PassOptionsToPackage{numbers, compress}{natbib}

\usepackage[final]{neurips_2023}




\usepackage[utf8]{inputenc} 
\usepackage[T1]{fontenc}    
\usepackage[pagebackref,breaklinks,colorlinks]{hyperref}
\usepackage{url}            
\usepackage{booktabs}       
\usepackage{amsfonts}       
\usepackage{nicefrac}       
\usepackage{microtype}      
\usepackage{comment}
\usepackage{amsmath}
\usepackage[ruled,vlined]{algorithm2e}
\usepackage{algpseudocode}
\usepackage{amsthm}
\usepackage{amssymb}
\usepackage{enumitem}
\usepackage[table]{xcolor}
\usepackage[export]{adjustbox}
\usepackage{wrapfig,lipsum,booktabs}
\usepackage{makecell}
\usepackage{caption}
\usepackage{float}

\usepackage{multirow}
\usepackage{arydshln}
\usepackage{bm}
\newcommand{\cmark}{\ding{51}}
\newcommand{\xmark}{\ding{55}}
\usepackage{pifont}

\definecolor{mygray}{gray}{.9}
\definecolor{mygray1}{gray}{.97}
\definecolor{mygreen}{RGB}{93,173,85}
\newcommand{\pub}[1]{{\color{gray}{\tiny{[{#1}]}}}}

\makeatletter
\newcommand{\thickhline}{
    \noalign {\ifnum 0=`}\fi \hrule height 1pt
    \futurelet \reserved@a \@xhline
}

\makeatletter
\newenvironment{fullitemize}
{
\vspace{-6pt}
\begin{itemize}[fullwidth]
    \setlength{\topsep}{0pt}
    \setlength{\itemsep}{0pt}
    \setlength{\parsep}{0pt}
    \setlength{\parskip}{0pt}
    \setlength{\partopsep}{0pt}
}
{\end{itemize}
\vspace{-6pt}}
\makeatother

\usepackage{listings}
\usepackage{setspace}

\definecolor{codegreen}{RGB}{79,126,127}
\definecolor{codedefine}{RGB}{153,54,159}
\definecolor{codevar}{RGB}{73,122,234}
\definecolor{codefunc}{RGB}{184,134,11}
\definecolor{codetype}{RGB}{58,95,205}
\definecolor{codecall}{RGB}{73,122,234}
\definecolor{codepro}{RGB}{212,96,80}
\definecolor{codedim}{RGB}{89,152,195}

\title{Label-efficient Segmentation via Affinity Propagation}

%
\author{Wentong Li$^{1*}$, Yuqian Yuan$^{1}$\thanks{Equal contribution}~~, Song Wang$^{1}$, Wenyu Liu$^{1}$,  \\ \textbf{Dongqi Tang$^{2}$, Jian Liu$^{2}$, Jianke Zhu$^{1}$\thanks{Correspondence author}, Lei Zhang$^{3}$} \\
  $^{1}$Zhejiang University \quad
  $^{2}$Ant Group  \quad
  $^{3}$The Hong Kong Polytechnical University\\
  \url{https://LiWentomng.github.io/apro/}}

\begin{document}

\maketitle

\begin{abstract} 
  Weakly-supervised segmentation with label-efficient sparse annotations has attracted increasing research attention to reduce the cost of laborious pixel-wise labeling process, while the pairwise affinity modeling techniques play an essential role in this task. Most of the existing approaches focus on using the local appearance kernel to model the neighboring pairwise potentials. However, such a local operation fails to capture the long-range dependencies and ignores the  topology of objects. 
  In this work, we formulate the affinity modeling as an affinity propagation process, and propose a local and a global pairwise affinity terms to generate accurate soft pseudo labels. An efficient algorithm is also developed to reduce significantly the computational cost. The proposed approach can be conveniently plugged into existing segmentation networks. Experiments on three typical label-efficient segmentation tasks, \textit{i.e.} box-supervised instance segmentation, point/scribble-supervised semantic segmentation and CLIP-guided semantic segmentation, demonstrate the superior performance of the proposed approach. 
\end{abstract}

\section{Introduction}
Segmentation is a widely studied problem in computer vision, aiming at generating a mask prediction for a given image, \textit{e.g.,} grouping each pixel to an object instance (\textit{instance segmentation}) or assigning each pixel a category label (\textit{semantic segmentation}). While having achieved promising performance, most of the existing  approaches are trained in a fully supervised manner, which heavily depend on the pixel-wise mask annotations, incurring tedious labeling costs~\cite{shen2023survey}.
Weakly-supervised methods have been proposed to reduce the dependency on dense pixel-wise labels with label-efficient sparse annotations, such as  points~\cite{bearman2016s,fan2022pointly,li2023point2mask},  scribbles~\cite{lin2016scribblesup, tang2018regularized, liang2022tree}, bounding boxes~\cite{cvpr2021boxinst, li2022box, CVPR2023MAL, li2022box2mask} and image-level labels~\cite{cvpr2018learning,ahn2019weakly,ru2022learning,li2022expansion}.
Such methods make dense segmentation more accessible with less annotation costs for new categories or scene types. 

Most of the existing weakly-supervised segmentation methods~\cite{nips2019-bbtp, ahn2019weakly,cvpr2021boxinst, iccv2021discobox, fan2022pointly,CVPR2023MAL} adopt the local appearance kernel to model the neighboring pairwise affinities, where spatially nearby pixels with similar color (\textit{i.g.}, LAB color space~\cite{cvpr2021boxinst,fan2022pointly} or RGB color space~\cite{ahn2019weakly, nips2019-bbtp, iccv2021discobox, CVPR2023MAL}) are likely to be in the same class. Though having proved to be effective, these methods suffer from two main limitations. First, the local operation cannot capture global context cues and capture long-range affinity dependencies. Second, the appearance kernel fails to take the intrinsic topology of objects into account, and lacks capability of detail preservation.

To address the first issue, one can directly enlarge the kernel size to obtain a large receptive filed. However, this will make the segmentation model insensitive to local details and increase the computational cost greatly. Some methods~\cite{ru2022learning,cvpr2018learning} model the long-range affinity via random walk~\cite{cvpr2017randomwalk}, but they cannot model the fine-grained semantic affinities.
As for the second issue, the tree-based approaches~\cite{liang2022tree,nips2019treev1} are able to preserve the geometric structures of objects, and employ the minimum spanning tree~\cite{kruskal1956shortest} to capture the pairwise relationship. However, the affinity  interactions with distant nodes will decay rapidly as the distance increases along the spanning tree, which still focuses on the local nearby regions.
LTF-V2~\cite{nips2020treev2} enables the long-range tree-based interactions but it fails to model the valid pairwise affinities for label-efficient segmentation task. 



With the above considerations, we propose a novel component, named Affinity Propagation (\texttt{APro}), 
which can be easily embedded in existing methods for label-efficient segmentation. 
Firstly, we define 
the weakly-supervised segmentation from a new perspective,  
and formulate it as a uniform affinity propagation process. The modelled pairwise term propagates the unary term to other nearby and distant pixels and updates the soft pseudo labels progressively.
Then, we introduce the global affinity propagation, which leverages the topology-aware tree-based graph and relaxes the geometric constraints of spanning tree to capture the long-range pairwise affinity. With the efficient design, the $\mathcal{O}(N^2)$ complexity of brute force implementation is reduced to $\mathcal{O}(N\log N)$, and the global propagation approach can be performed with much less resource consumption for practical applications. 
Although the long-range pairwise affinity is captured, it inevitably brings in noises based on numerous pixels in a global view.
To this end, 
we introduce a local affinity propagation to encourage the piece-wise smoothness with spatial consistency. 
The formulated \texttt{APro} can be embedded into the existing segmentation networks to generate accurate soft pseudo labels online for unlabeled regions. 
As shown in Fig.~\ref{fig:introduction}, it can be seamlessly plugged into the existing segmentation networks for various tasks to achieve weakly-supervised segmentation with label-efficient sparse annotations. 


We perform experiments on three typical label-efficient segmentation tasks, \textit{i.e.} box-supervised instance segmentation, point/scribble-supervised semantic segmentation and annotation-free semantic segmentation with pretrained CLIP model, and the results demonstrated the superior performance of our proposed universal label-efficient approach. 

\begin{figure}[t]
\begin{center}
\includegraphics[width=1.0\linewidth]{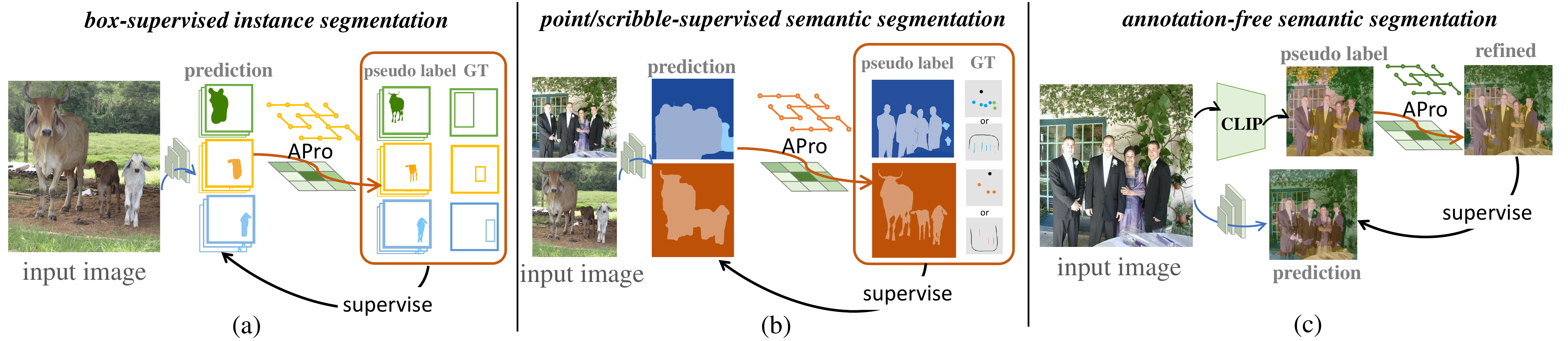} \end{center}
 \vspace{-5pt}
    \caption{The proposed approach upon the typical weakly-supervised segmentation tasks with label-efficient annotations, including (a) box-supervised instance segmentation, (b) point/scribble-supervised semantic segmentation and (c) annotation-free semantic segmentation with CLIP pretrained model.}
 \label{fig:introduction}
 \vspace{-17pt}
 \end{figure}

\section{Related Work}

\textbf{Label-efficient Segmentation.}
Label-efficient segmentation, 
which is based on the weak supervision from partial or sparse labels, has been widely explored~\cite{shen2023survey}. Different from semi-supervised settings~\cite{zhu2003semi,zhu2005semi}, this paper mainly focuses on the segmentation with sparse labels.
In earlier literature~\cite{vezhnevets2010towards,pathak2014fully,pathak2015constrained,pinheiro2015image,papandreou2015weakly},
it primarily pertained to image-level labels.
Recently, diverse sparse annotations have been employed,
including \textit{point}, \textit{scribble}, \textit{bounding box} , \textit{image-level label} and the combinations of them. 
We briefly review the weakly-supervised instance segmentation, semantic segmentation and panoptic segmentation tasks in the following.

For weakly-supervised instance segmentation, box-supervised methods~\cite{nips2019-bbtp, cvpr2021boxinst, iccv2021discobox, li2022box, li2023sim, cheng2022boxteacher, li2022box2mask,CVPR2023MAL, ICCV2023boxsnake} are dominant and perform on par with fully-supervised segmentation approaches. Besides, the ``points + bounding box'' annotation can also achieve competitive performance~\cite{cheng2022pointly,tang2022active}. 
As for weakly-supervised semantic segmentation, previous
works mainly focus on the point-level supervision~\cite{bearman2016s, chen2021seminar, tang2018regularized} and scribble-level supervision~\cite{lin2016scribblesup, zhang2021affinity, marin2019beyond}, which utilize the spatial and color information of the input image and are trained with two stages. Liang \textit{et al.}~\cite{liang2022tree} introduced an effective tree energy loss based on the low-level and high-level features for point/scribble/block-supervised semantic segmentation. For semantic segmentation, the supervision of image-level labels has been well explored~\cite{cvpr2018learning,liu2020leveraging,li2022expansion,ru2022learning}. Recently, some works have been proposed to make use of the large-scale pretrained CLIP model~\cite{radford2021learning} to achieve weakly-supervised or annotation-free semantic segmentation~\cite{lin2022clip,zhou2022extract}.
In addition, weakly-supervised panoptic segmentation methods~\cite{fan2022pointly,li2022fully, li2023point2mask} with a single and multiple points annotation have been proposed. 
In this paper, we aim to develop a universal component for various segmentation tasks, which can be easily plugged into the existing segmentation networks. 

\textbf{Pairwise Affinity Modeling.}
Pairwise affinity modeling plays an important role in many computer vision tasks.
Classical methods, like CRF~\cite{nips2011crf}, make full use of the  color and spatial information to model the pairwise relations in the semantic labeling space. Some works use it as a post-processing module~\cite{chen2017deeplab}, while others
integrate it as a jointly-trained part into the deep neural network~\cite{zheng2015conditional,obukhov2019gated}.
Inspired by CRF, some recent approaches explore the local appearance kernel to tap the neighboring pairwise affinities, where spatially nearby pixels with similar colors are more likely to fall into the same class. Tian \textit{et al.}~\cite{cvpr2021boxinst} proposed the local pairwise loss, which models the neighboring relationship in LAB color space. Some works~\cite{CVPR2023MAL, nips2019-bbtp, iccv2021discobox} focus on the pixel relation in RGB color space directly, and achieve competitive performance. 
However,  the local operation  fails to capture global context cues and  lacks the long-range affinity dependencies. Furthermore, the appearance kernel cannot reflect the topology of objects, missing the details of semantic objects. 
To model the structural pair-wise relationship, tree-based approaches~\cite{liang2022tree, nips2019treev1} leverage the minimum spanning tree~\cite{kruskal1956shortest} to capture the topology of objects in the image.
However, the affinity interactions with distant nodes  decay rapidly as the distance increases along the spanning tree.
Besides, Zhang \textit{et al.}~\cite{zhang2021affinity} proposed an affinity network to convert an image to a weighted graph, and model the node affinities by the graph neural network (GNN).
Different from these methods, in this work we model the pairwise affinity from a new affinity propagation perspective both globally and locally. 

\begin{figure}[t]
\begin{center}
\includegraphics[width=0.99\linewidth]{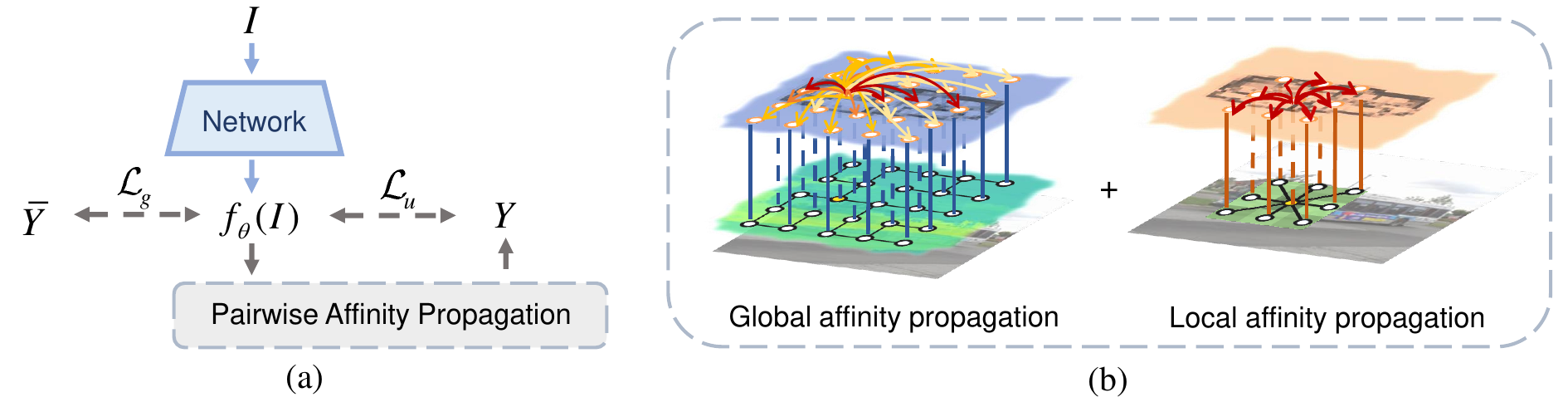} \end{center}
    \vspace{-6pt}
    \caption{Overview of our \texttt{APro} approach. (a) The general weakly supervised segmentation framework with the proposed pairwise affinity propagation. (b) The proposed approach consists of global affinity propagation (GP) and local affinity propagation (LP) to generate accurate pseudo labels.}
 \label{fig:overallnetwork}
  \vspace{-9pt}
 \end{figure}

\section{Methodology}
In this section, we introduce our proposed affinity propagation (\texttt{APro}) approach to label-efficient segmentation. First, we define the problem and model the  \texttt{APro} framework in Section~\ref{sec:problem-def}. Then, we describe the detailed pairwise affinity propagation method in Section~\ref{sec:pairwise}. Finally, we provide an efficient implementation of our method in Section~\ref{sec:efficient}.

\subsection{Problem Definition and Modeling}\label{sec:problem-def}

Given an input image $I=\{x_i\}^N$ with  $N$ pixels  and its available sparse 
ground truth labeling ${\bar Y}$ (\textit{i.e.} points,  scribble, or bounding box), and let ${f_\theta }(I)$ be the output of a segmentation network with the learnable parameters $\theta$,
the whole segmentation network can be regarded as a neural network optimization problem as follows: 
\begin{equation}
\mathop {\min }\limits_\theta  \left\{ {\mathcal{L}_{g}({f_\theta }(I), {\bar Y}) + \mathcal{L}_{u}({f_\theta }(I), Y)} \right\},
\end{equation}
where $\mathcal{L}_{g}$ is a ground truth loss on the set of labeled pixels $\Omega_g$ with ${\bar Y}$ and $\mathcal{L}_{u}$ is a loss on the unlabeled regions $\Omega_u$ with the pseudo label $Y$. 
As shown in Fig.~\ref{fig:overallnetwork}-(a),  our goal is to obtain the accurate pseudo label $Y$ by leveraging the image pixel affinities for unlabeled regions. 

As in the classic MRF/CRF model~\cite{freeman2000learning,nips2011crf}, the unary term reflects the per-pixel confidence of assigning labels, while pairwise term captures the inter-pixel constraints. We define the generation of pseudo label $Y$ as an affinity propagation process, which can be formulated as follows: 
\begin{equation}\label{equ:propagation}
{y_i} = \frac{1}{{{z_i}}}\sum\limits_{{j} \in \tau} {\phi ({x_j})} \psi ({x_i},{x_j}),
\end{equation}
where ${\phi}({x_j})$ denotes the unary potential term, which is used to align $y_j$ and the corresponding network prediction based on the available sparse labels.
$\psi ({x_i},{x_j})$ indicates the pairwise potential, which  models the inter-pixel relationships to constrain the predictions and  produce accurate pseudo labels. $\tau$ is the region with different receptive fields. $z_i$ is the summation of pairwise affinity $\psi ({x_i},{x_j})$ along with $j$ to normalize the response.

Notably, we unify both  global and local pairwise potentials in an affinity propagation process formulated in Eq.~\ref{equ:propagation}. As shown in Fig.~\ref{fig:overallnetwork}-(b),  the global affinity propagation (GP) can capture the pairwise affinities with topological consistency in a global view, while the local affinity propagation (LP) can obtain the pairwise affinities with local spatial consistency.
Through the proposed component, the soft pseudo labels $Y$ can be obtained. 
We assign each $y_i$ from GP and LP with the network prediction $p_i$ and directly employ the distance measurement function as the objective for unlabeled regions $\Omega_u$. Simple $L_1$ distance is empirically adopted in our implementation.



\subsection{Pairwise Affinity Propagation}\label{sec:pairwise}

\subsubsection{Global Affinity Propagation}


We firstly provide a solution to model the global affinity efficiently based on the input image. 
Specifically, we represent an input image as a 4-connected planar graph $\mathcal{G}$, where each node is adjacent to up to 4 neighbors. The weight of the edge measures the image pixel distance between adjacent nodes. 
Inspired by tree-based approaches~\cite{yang2014stereo,nips2019treev1}, we employ the minimum spanning tree (MST) algorithm~\cite{kruskal1956shortest} to remove the edge with a large distance to obtain the tree-based sparse graph $\mathcal{G}_T$, \textit{i.e.}, $\mathcal{G}_T \leftarrow \text{MST}(\mathcal{G})$ and $\mathcal{G}_T=  \{ \mathcal{V},\mathcal{E}\}$, where  $\mathcal{V}= \{\mathcal{V}_i\}^N$ is the set of nodes and $\mathcal{E}=\{ \mathcal{E}_i\}^{N-1}$ denotes the set of edges.

Then, we model the global pairwise potential by iterating over each node. To be specific, we take the current node as the root of the spanning tree $\mathcal{G}_T$ and propagate the long-range affinities to other nodes. While the distant nodes along the spanning tree need to pass through nearby nodes along the path of spanning tree, the distance-insensitive \texttt{max} affinity function can alleviate this geometric constraint and relax the affinity decay for long-range nodes. Hence, we define the global pairwise potential $\psi _g$ as follows:  
\begin{equation}
{\psi _g}({x_i},{x_j}) = \mathop {\mathcal{T}({I_i},{I_j})}\limits_{\forall j \in \mathcal{V}}  = \exp ( - \mathop {\max }\limits_{\forall(k,l) \in {\mathbb{E}_{i,j}}} \frac{{{w_{k,l}}}}{{{\zeta_g}^2}}),
\end{equation}
where $\mathcal{T}$ denotes the global tree. $\mathbb{E}_{i,j}$ is the set of edges in the path of $\mathcal{T}$ from node $j$ to node $i$. $w_{k,l}$ indicates the edge weight between the adjacent nodes $k$ and $l$, which is represented  as the Euclidean distance between pixel values of two adjacent nodes, $\textit{i.e.}, {w_{k,l}} = {\left| {{I_k} - {I_l}} \right|^2}$. 
$\zeta_g$ controls the degree of similarity with the long-range pixels. 
In this way, the global affinity propagation (GP) process to obtain the soft label $y^g$ can be formulated as follows: 
\begin{equation}
y_i^g = \frac{1}{{{z_i^g}}}\sum\limits_{ j \in \mathcal{V}} {\phi ({x_j})} {\psi_g}({x_i},{x_j}), \ \ {z_i^g} = \sum\limits_{j \in {\mathcal{V}}} {{\psi_g}({x_i},{x_j})},
\end{equation}
where interactions with distant nodes are performed over tree-based topology. In addition to utilizing low-level image, we empirically employ high-level feature as input to propagate semantic affinity.






\subsubsection{Local Affinity Propagation}
The long-range pairwise affinity is inevitably noisy since it is computed based on the susceptible image features in a global view. 
The spatially nearby pixels are more likely to have the same label, while they have certain difference in color and intensity, \textit{etc}. Hence, we further introduce the local affinity propagation (LP) to promote the piece-wise smoothness.
The Gaussian kernel is widely used to capture the local relationship among the neighbouring pixels in previous works~\cite{obukhov2019gated, cvpr2021boxinst, CVPR2023MAL}. 
Different from these works, we define the local pairwise affinity via the formulated affinity propagation process.
The local pairwise term $\psi _s$ is defined as: 
\begin{equation}
{\psi _s}({x_i},{x_j}) = \mathop {{\cal K}{\rm{ }}({I_i},{I_j})}\limits_{j \in {{\cal N}}(i)}  = \exp \left( {\frac{{ - {{\left| {{I_i} - {I_j}} \right|}^2}}}{{\zeta_s ^2 }}} \right),
\end{equation}
where ${\cal K}$ denotes the Gaussian kernel, $\mathcal{N}(i)$ is the set containing all local neighbor pixels. 
The degree of similarity is controlled by parameter $\zeta_s$. Then the pseudo label $y^s$ can be obtained via the following affinity propagation: 
\begin{equation}
{y^s_i} = \frac{1}{{{{z}_i^s}}}\sum\limits_{j \in {\mathcal{N}}(i)} {{\phi}({x_j})} {\psi_s}({x_i},{x_j}), \ \ {z_i^s} = \sum\limits_{j \in {\mathcal{N}}(i)} {{\psi _s}({x_i},{x_j})}, 
\end{equation}
where the local spatial consistency is maintained based on high-contrast neighbors. 
To obtain a robust segmentation performance, multiple iterations are required. Notably, our LP process ensures a fast convergence, which is 5$\times$ faster than 
MeanField-based  method~\cite{CVPR2023MAL,nips2011crf}. The details can be found in the experimental section~\ref{sec:ablation}.

\subsection{Efficient Implementation}\label{sec:efficient}
Given a tree-based graph $\mathcal{G}_T = \{ \mathcal{V},\mathcal{E}\}$ in the GP process, we define the maximum $\boldsymbol{w}$ value of the path through any two vertices as the transmission cost $\mathcal{C}$.
One straightforward approach to get $y_i^g$ of vertex $i$ is to traverse each vertex $j$ by Depth First Search or Breadth First Search to get the transmission cost $\mathcal{C}_{i,j}$ accordingly.
Consequently, the computational complexity required to acquire the entire output ${y^g}$ is $\mathcal{O}(N^2)$, making it prohibitive in real-world applications. 

Instead of calculating and updating the transmission cost of  any two vertices,
we design a lazy update algorithm to accelerate the GP process.
Initially, each node is treated as a distinct union, represented by $\mathcal{U}$. Unions are subsequently connected based on each edge $w_{k,l}$ in ascending order of $\boldsymbol{w}$. We show that when connecting two unions $\mathcal{U}_k$ and $\mathcal{U}_l$, $w_{k,l}$ is equivalent to the transmission cost for all nodes within $\mathcal{U}_k$ and $\mathcal{U}_l$. This is proved in the \textbf{supplementary material}.

To efficiently update values, we introduce a \textbf{\textit{Lazy Propagation}} scheme. We only update the value of the root node and postpone the update of its descendants. The update information is retained in a \textit{lazy tag} $\mathcal{Z}$ and is updated as follows:
\begin{equation}
\mathcal{Z}(\delta )_{k^*} = \mathcal{Z}(\delta )_{k^*}+
\begin{cases}
\text{exp}(-w_{k,l}/{\zeta_g}^2)S(\delta)_{l} & \mathcal{U}_{k}.\text{rank} > \mathcal{U}_{l}.\text{rank},\\
\text{exp}(-w_{k,l}/{\zeta_g}^2)S(\delta)_{l} - \mathcal{Z}(\delta )_{l^*}& \text{otherwise},
\end{cases}
\end{equation}
where $S(\delta)_i = \sum_{j\in \mathcal{U}_i} \delta_j$, $\delta$ means different inputs, including the dense prediction $\phi(x)$ and all-one matrix $\Lambda$. $k^*/l^*$ denotes the root node of node $k/l$.

Once all unions are connected, the lazy tags can be propagated downward from the root node to its descendants. For the descendants, the global affinity propagation term is presented as follows:
\begin{equation}
    LProp(\delta)_i = \delta_i+\sum_{r\in Asc_{\mathcal{G}_T}(i) \cup \{i\}}\mathcal{Z}(\delta)_r,
\end{equation}
 where $Asc_{\mathcal{G}_T}(i)$ represents the ascendants of node $i$ in the tree $\mathcal{G}_T$. As shown in Algorithm \ref{alg:Algorithm1}, the disjoint-set data structure is employed to implement the proposed algorithm. In our implementation, a Path Compression strategy is applied, connecting each node on the path directly to the root node. Consequently, it is sufficient to consider the node itself and its parent node to obtain $LProp$. 




\begin{algorithm}
\caption{Algorithm for GP process }\label{alg:Algorithm1}
\KwIn{Tree $\mathcal{G}_T\in \mathbb{N}^{e\times2}$; Pairwise distance $\boldsymbol{w} \in \mathbb{R}^{N}$; Dense predictions $\phi(x)\in \mathbb{R}^{N}$; \\
\hspace*{0.35 in} Vertex num $N$; Edge num $e=N-1$; Set of vertices $\mathcal{V}$.}
\KwOut{$y^g \in \mathbb{R}^{N}$. }
$\Lambda \, \leftarrow \bm{1} \in \mathbb{R}^N$\\
$F \leftarrow \{0,1,2,...,N-1\}$ \Comment{Initialize each vertex as a connected block}\\
Sort \{$\mathcal{G}_T$, $\boldsymbol{w}$\} in ascending order of $\boldsymbol{w}$. \Comment{Quick Sort}\\
\For{$(k,l) \in \mathcal{G}_T, w_i \in \boldsymbol{w}$ }{ 
$a$ $\leftarrow$ find($k$), $b$ $\leftarrow$ find($l$)\Comment{Find the root node with Path Compression}\\
Update\{$\mathcal{Z}(\phi)_a$, $\mathcal{Z}(\Lambda)_a$, $\mathcal{Z}(\phi)_b$, $\mathcal{Z}(\Lambda)_b$\}\Comment{Add lazy tag} \\
\If{$S_a<S_b$ }{
swap($a, b$) \Comment{Merge by Rank}\\
}
$F_b$ $\leftarrow$ $a$\Comment{Merge two connected blocks}\\

}
\For{$v \in \mathcal{V}$}{ 
$p \leftarrow$ find($v$) \\
\For{$\delta \in \{\phi, \Lambda\}$}{
\eIf{$p=v$}{
$LProp(\delta)_v = \mathcal{Z}(\delta)_v+\delta_v$\\
}
{
$LProp(\delta)_v = \mathcal{Z}(\delta)_p+\mathcal{Z}(\delta)_v+\delta_v $\\
}
}
$y_v^g = \frac{LProp(\phi)_v}{LProp(\Lambda)_v}$\Comment{Normalization}\\
}
\Return{$y^g$}
\end{algorithm}

\textbf{Time complexity.}
For each channel of the input, the average time complexity of sorting is $\mathcal{O}(N\log N)$.  In the merge step, we utilize the Path Compression and Union-by-Rank strategies, which have a complexity of $\mathcal{O}(\alpha(N))$\cite{tarjan1979class}. After merging all the concatenated blocks, the lazy tags can be propagated in $\mathcal{O}(N)$ time. Hence, the overall complexity is $\mathcal{O}(N\log N)$. Note that the batches and channels are independent of each other. Thus, the algorithm can be executed in parallel for both batches and channels for practical implementations.
As a result, the proposed algorithm reduces the computational complexity dramatically.

\section{Experiments}
\subsection{Weakly-supervised Instance Segmentation}\label{box-results}

\textbf{Datasets.} As in prior arts~\cite{cvpr2021boxinst,li2022box, nips2019-bbtp,iccv2021discobox}, we conduct experiments on two widely used datasets for the \textit{weakly box-supervised instance segmentation} task:
\begin{fullitemize}
\item
COCO~\cite{lin2014microsoft}, which has 80 classes with 115K \texttt{train2017} images and 5K \texttt{val2017} images.
\vspace{-2pt}
\end{fullitemize}

\begin{itemize}[leftmargin=*]
	\setlength{\itemsep}{0pt}
	\setlength{\parsep}{-2pt}
	\setlength{\parskip}{-0pt}
	\setlength{\leftmargin}{-10pt}
	\vspace{-6pt}
  \item Pascal VOC~\cite{chen2017deeplab} augmented by SBD~\cite{iccv2011SBDdataset} based on the original Pascal VOC 2012~\cite{pascalvoc2010}, which has 20 classes with 10,582 \texttt{trainaug} images and 1,449 \texttt{val} images.
  \vspace{-8pt}
\end{itemize}

\textbf{Base Architectures and Competing Methods.} In the evaluation, we apply our proposed \texttt{APro} to
two representative instance segmentation architectures, SOLOv2~\cite{nips2020solov2} and  Mask2Former~\cite{cvpr2022mask2former}, with different backbones (\textit{i.e.}, ResNet~\cite{he2016deep}, Swin-Transformer~\cite{liu2021swin}) following Box2Mask~
\cite{li2022box2mask}. 
We compare our approach  with its counterparts that model the pairwise affinity based on the image pixels without modifying the base segmentation network for box-supervised setting.
Specifically, the compared methods include Pairwise Loss~\cite{cvpr2021boxinst}, TreeEnergy Loss~\cite{liang2022tree} and CRF Loss~\cite{CVPR2023MAL}.
For fairness, we re-implement these models using the default setting in MMDetection~\cite{chen2019mmdetection}. 

\textbf{Implementation Details.}
We follow the commonly used training settings on each dataset as in MMDetection~\cite{chen2019mmdetection}. All models are initialized with ImageNet~\cite{imagenet} pretrained backbone. For SOLOv2 framework~\cite{nips2020solov2}, the scale jitter is used, where the shorter image side is randomly sampled from 640 to 800 pixels. For Mask2Former framework~\cite{cvpr2022mask2former}, the large-scale jittering augmentation scheme~\cite{ghiasi2021simple} is employed with a random scale sampled within range [0.1, 2.0], followed by a fixed size crop to 1024×1024. The initial learning rate is set to 10$^{-4}$ and the weight decay is 0.05 with 16 images per mini-batch. The box projection loss~\cite{cvpr2021boxinst, li2022box} is employed to constrain the network prediction within the bounding box label as the unary term $\phi$. 
COCO-style mask AP (\%) is adopted for evaluation. 

\textbf{Quantitative Results.} Table~\ref{tab:boxsota} shows the quantitative results. We compare the approaches with the same architecture for fair comparison. The state-of-the-art methods are listed for reference.
One can see that our \texttt{APro} method outperforms its counterparts across Pascal VOC and COCO datasets.
\begin{itemize}[leftmargin=*]
	\setlength{\itemsep}{0pt}
	\setlength{\parsep}{-2pt}
	\setlength{\parskip}{-0pt}
	\setlength{\leftmargin} 
     {-10pt}
	\vspace{-10pt}
  \item Pascal VOC~\cite{chen2017deeplab} \texttt{val}. Under the SOLOv2 framework, our approach achieves 37.1\% AP and 38.4\% AP with 12 epochs and 36 epochs, respectively, outperforming other methods by 1.4\%-2.5\% mask AP with ResNet-50. With the Mask2Former framework, our approach also outperforms its counterparts. Furthermore, with the  Swin-L backbone~\cite{liu2021swin}, our proposed approach achieves very promising performance, 49.6\% mask AP with 50 epochs.
  \vspace{-5pt}
\end{itemize}
\begin{itemize}[leftmargin=*]
	\setlength{\itemsep}{0pt}
	\setlength{\parsep}{-2pt}
	\setlength{\parskip}{-0pt}
	\setlength{\leftmargin}{-10pt}
	\vspace{-9pt}
  \item COCO~\cite{lin2014microsoft} \texttt{val}. Under the SOLOv2 framework, our approach 
  achieves 32.0\% AP and 32.9\% AP with 12 epochs and 36 epochs, and surpasses its best counterpart by 1.0\% AP and 0.4\% AP using ResNet-50, respectively. Under the Mask2Former framework, our method still achieves the best performance with ResNet-50 backbone. Furthermore, equipped with stronger backbones, our approach obtains more robust performance, achieving 38.0\% mask AP with ResNet-101, and 41.0\% mask AP using Swin-L backbone.
  \vspace{-8pt}
\end{itemize}
\textbf{Qualitative Results.} Fig.~\ref{fig:compare} illustrates the visual comparisons on affinity maps of our \texttt{APro} and other approaches, and Fig.~\ref{fig:vis} compares the segmentation results. One can clearly see that our method captures accurate  pairwise affinity with object's topology and yields more fine-grained predictions. 

\newcommand{\reshl}[2]{
\textbf{#1} \fontsize{7.5pt}{1em}\selectfont\color{mygreen}{$\!\uparrow\!$ \textbf{#2}}
}
\setlength\intextsep{0pt}
\begin{table}[t]
\setlength{\abovecaptionskip}{0cm}
\begin{center}
    \caption{Quantitative results (\S\ref{box-results}) on  Pascal VOC~\cite{chen2017deeplab} and  COCO ~\texttt{val}~\cite{lin2014microsoft} with mask AP(\%).}
    \vspace{0.1em}
    \label{tab:boxsota}
    {\hspace{-1.3ex}
    \resizebox{0.95\textwidth}{!}{
    \setlength\tabcolsep{2.2mm}
    \begin{tabular}{r||c|c||ccc||ccc}
      \hline\thickhline
      \rowcolor{mygray}
      & &  & \multicolumn{3}{c||}{\textbf{Pascal VOC} } & \multicolumn{3}{c}{\textbf{COCO}} \\
      \rowcolor{mygray}
       \multicolumn{1}{c||}{\multirow{-2}{*}{Method}}
      & \multicolumn{1}{c|}{\multirow{-2}{*}{Backbone}}  & 
        \multicolumn{1}{c||}{\multirow{-2}{*}{\#Epoch}} &
        \multicolumn{1}{c}{{AP}} & \multicolumn{1}{c}{{AP$_{\text{50}}$}} & \multicolumn{1}{c||}{{AP$_{\text{75}}$}} &  \multicolumn{1}{c}{{AP}}  & \multicolumn{1}{c}{{AP$_{\text{50}}$}} & \multicolumn{1}{c}{{AP$_{\text{75}}$}} \\\hline\hline
      {BBTP~\pub{NeurIPS19}}{~\cite{nips2019-bbtp}}                 &  {ResNet-\text{101}} &  12 & 23.1 & 54.1 & 17.1 & 21.1 & 45.5  & 17.2  \\
      \cdashline{1-9}[1pt/1pt]
      {BoxInst~\pub{CVPR21}}{~\cite{cvpr2021boxinst}}           & {ResNet-\text{50}} & 36 & 34.3 & 58.6 & 34.6 & 31.8 & 54.4 & 32.5\\
      DiscoBox~\pub{ICCV21}{~\cite{iccv2021discobox}} & {ResNet-\text{50}} & 36  & - & 59.8 & 35.5 & 31.4  & 52.6 & 32.2   \\ 
      BoxLevelset~\pub{ECCV22}{~\cite{li2022box}} & {ResNet-\text{50}} & 36 & 36.3 & 64.2 & 35.9 & 31.4 & 53.7 & 31.8  \\
      \hline
      \rowcolor{mygray1}
      \multicolumn{9}{c}{\small \em SOLOv2 Framework} \\
      \hline
      {Pairwise Loss~\pub{CVPR21}}{~\cite{{cvpr2021boxinst}}} & {ResNet-\text{50}} & 12 & 35.7 & 64.3 & 35.1 & 31.0  & 52.8 & 31.5  \\
      {TreeEnergy Loss~\pub{CVPR22}}{~\cite{{liang2022tree}}} & {ResNet-\text{50}} & 12 & 35.0 & 64.4 & 34.7 & 30.9  & 52.9 & 31.3  \\
      {CRF Loss~\pub{CVPR23}}{~\cite{{CVPR2023MAL}}} & {ResNet-\text{50}} & 12 & 35.0 & 64.7 & 34.9 & 30.9 & 53.1 & 31.4  \\
      \texttt{APro}(\texttt{Ours})                & {ResNet-\text{50}} & 12 & \textbf{37.1} & \textbf{65.1} & \textbf{37.0} & \textbf{32.0}  & \textbf{53.4} & \textbf{32.9}  \\
      \cdashline{1-9}[1pt/1pt]
      {Pairwise Loss~\pub{CVPR21}}{~\cite{{cvpr2021boxinst}}} &{{ResNet-\text{50}}} & 36 & 36.5 &  63.4 & 38.1  & 32.4 & 54.5 & 33.4  \\
      {TreeEnergy Loss~\pub{CVPR22}}{~\cite{{liang2022tree}}} & {{ResNet-\text{50}}} & 36 & 36.1 & 63.5 & 36.1  & 31.4 & 54.0 & 31.2 \\
      {CRF Loss~\pub{CVPR23}}{~\cite{{CVPR2023MAL}}} &{{ResNet-\text{50}}} & 36 & 35.9 & 64.0 & 35.7 & 32.5  & 54.9 & 33.2  \\
      \texttt{APro}(\texttt{Ours})                 &  {{ResNet-\text{50}}} & {36} & 38.4 & 65.4 & 39.8 & 32.9 & 55.2 & 33.6    \\
      \texttt{APro}(\texttt{Ours})                 & {ResNet-\text{101}}& 36 & \textbf{40.5 }& \textbf{67.9} & \textbf{42.6}  & \textbf{34.3}  & \textbf{57.0} & \textbf{35.3}   \\
      \hline
      \rowcolor{mygray1}
      \multicolumn{9}{c}{\small \em Mask2Former 
       Framework} \\
       \hline
      {Pairwise Loss~\pub{CVPR21}}{~\cite{{cvpr2021boxinst}}} & {{ResNet-\text{50}}}  & 12 &  35.2 & 62.9 & 33.9  & 33.8  & 57.1 & 34.0   \\
      {TreeEnergy Loss~\pub{CVPR22}}{~\cite{{liang2022tree}}} & {{ResNet-\text{50}}} & 12 & 36.0 & 65.0 & 34.3 &  33.5  & 56.7 & 33.7  \\
      {CRF Loss~\pub{CVPR23}}{~\cite{{CVPR2023MAL}}} & {{ResNet-\text{50}}} &  12  & 35.7 & 64.3 & 35.2 & 33.5 & 57.5 & 33.8  \\
      \texttt{APro}(\texttt{Ours})                & {{ResNet-\text{50}}}& {12}  & 37.0 & 65.1 & 37.0 & 34.4 & 57.7  & 35.3   \\
      \cdashline{1-9}[1pt/1pt]
      \texttt{APro}(\texttt{Ours})               & {{ResNet-\text{50}}}& {50} & 42.3 & 70.6 & 44.5 &  36.1 & 62.0 &  36.7   \\
      \texttt{APro}(\texttt{Ours})                 & {{ResNet-\text{101}}}& {50}  & 43.6 & 72.0 & 45.7 &  38.0 & 63.6 & 38.7   \\
       \texttt{APro}(\texttt{Ours})                 & {Swin-L}& {50} & \textbf{49.6} & \textbf{77.6} & \textbf{53.1} &\textbf{41.0} & \textbf{68.3}  & \textbf{41.9}   \\
      \hline
    \end{tabular}
    }
    }
\end{center}
\vspace{-16pt}
\end{table}

\subsection{Weakly-supervised Semantic Segmentation} \label{weak-semantic}

\textbf{Datasets.} We conduct experiments on the widely-used Pascal VOC2012 dataset~\cite{pascalvoc2010}, which contains 20 object categories and a background class. As in ~\cite{tang2018regularized,liang2022tree}, the augmented Pascal VOC dataset is adopted here. 
The \texttt{point}~\cite{bearman2016s} and \texttt{scribble}~\cite{lin2016scribblesup} annotations are employed for \textit{weakly point-supervised and scribble-supervised settings}, respectively.

\setlength\intextsep{0pt}
\begin{wraptable}{r}{0.65\linewidth}
	\centering
	\setlength{\abovecaptionskip}{0cm}
    \captionsetup{width=.65\textwidth}
    \caption{Quantitative results (\S\ref{weak-semantic}) on Pascal VOC2012~\cite{pascalvoc2010} \texttt{val} with mean IoU(\%).}
    \vspace{0.1em}
    \label{tab:sota-semantic}
    {\hspace{-1.3ex}
    \resizebox{0.65\textwidth}{!}{
    \setlength\tabcolsep{2pt}
    \renewcommand\arraystretch{1.02}
    \begin{tabular}{r||c|c|c||c}
      \hline\thickhline
      \rowcolor{mygray}
      \multicolumn{1}{c||}{Method} & \multicolumn{1}{c|}{Backbone} & \multicolumn{1}{c|}{Supervision} & \multicolumn{1}{c||}{CRF Post.}&\multicolumn{1}{c}{\textbf{mIoU}} \\\hline\hline
      $^\dagger${KernelCut Loss~\pub{ECCV18}}~\cite{tang2018regularized} & {DeepLabV2}   &  & \cmark  & 57.0 \\
      {$^\ast$TEL~\pub{CVPR22}}{~\cite{liang2022tree}}                 &        {LTF}      &   &  \xmark  & 66.8  \\
      \texttt{APro}({\texttt{Ours}}){}  & {LTF}  & \multirow{-3}{*}{{Point}} & \xmark  & \textbf{67.7}  \\
      \hline
      {$^\dagger$NormCut Loss~\pub{CVPR18}}~\cite{tang2018normalized} &{DeepLabV2} &   &  \cmark &74.5 \\
      {$^\dagger$DenseCRF Loss~\pub{ECCV18}}~\cite{tang2018regularized} & {DeepLabV2} &   & \cmark &75.0  \\
      {$^\dagger$KernelCut Loss~\pub{ECCV18}}~\cite{tang2018regularized} & {DeepLabV2} &  & \cmark & 75.0 \\
      {$^\dagger$GridCRF Loss~\pub{ICCV19}}~\cite{marin2019beyond} & {DeepLabV2} &   & 
       \xmark & 72.8  \\
      {PSI~\pub{ICCV21}}~\cite{marin2019beyond} & {DeepLabV3} &   & 
       \xmark & 74.9  \\
      {$^\ast$TEL~\pub{CVPR22}}~\cite{liang2022tree} & {LTF} &  & \xmark & 76.2 \\ 
      \texttt{APro}(\texttt{Ours})  & {LTF} & \multirow{-8}{*}{{Scribble}} & \xmark  &  \textbf{76.6}    \\  
      \hline
      \multicolumn{5}{l}{$\dagger$:adopting multi-stage training, $\ast$:our re-implementation.}
    \end{tabular}
    }
    }
\end{wraptable}
\textbf{Implementation Details.} 
As in~\cite{liang2022tree}, 
we adopt LTF~\cite{nips2019treev1} as the base segmentation model. 
The input size is 512 $\times$ 512. The SGD optimizer with momentum of 0.9 and weight decay of 10$^{-4}$ is used. The initial learning rate is  0.001, and there are \text{80}k training iterations. The same data augmentations as in~\cite{liang2022tree} are utilized. 
We employ the partial cross-entropy loss to make full use of the available point/scribble labels and constrain the unary term. ResNet-101~\cite{he2016deep} pretrained on ImageNet~\cite{imagenet} is adopted as backbone network for all methods.

\textbf{Quantitative Results.} As shown in Table~\ref{tab:sota-semantic}, we compare our \texttt{APro} approach with the state-of-the-art  methods on point-supervised and scribble-supervised semantic segmentation, respectively.

\begin{itemize}[leftmargin=*]
	\setlength{\itemsep}{0pt}
	\setlength{\parsep}{-2pt}
	\setlength{\parskip}{-0pt}
	\setlength{\leftmargin}{-10pt}
	\vspace{-6pt}
  \item Point-wise supervision. With DeepLabV2~\cite{chen2017deeplab}, KernelCut Loss~\cite{tang2018regularized} achieves 57.0\% mIoU. Equipped with LTF~\cite{nips2019treev1}, TEL~\cite{liang2022tree} achieves 66.8\% mIoU. Our \texttt{APro} achieves 67.7\% mIoU, outperforming the previous best method TEL~\cite{liang2022tree} by 0.9\% mIoU.
  \vspace{-8pt}
\end{itemize}

\begin{itemize}[leftmargin=*]
	\setlength{\itemsep}{0pt}
	\setlength{\parsep}{-2pt}
	\setlength{\parskip}{-0pt}
	\setlength{\leftmargin}{-10pt}
	\vspace{-6pt}
  \item Scribble-wise supervision. The scribble-supervised approaches are popular in weakly supervised semantic segmentation.  We apply the proposed approach under the single-stage training framework without calling for CRF post-processing during testing. Compared with the state-of-the-art methods, our approach achieves better performance with 76.6\% mIoU. 
  \vspace{-8pt}
\end{itemize}

\begin{figure}[t]
\begin{center}
\includegraphics[width=0.99\linewidth]{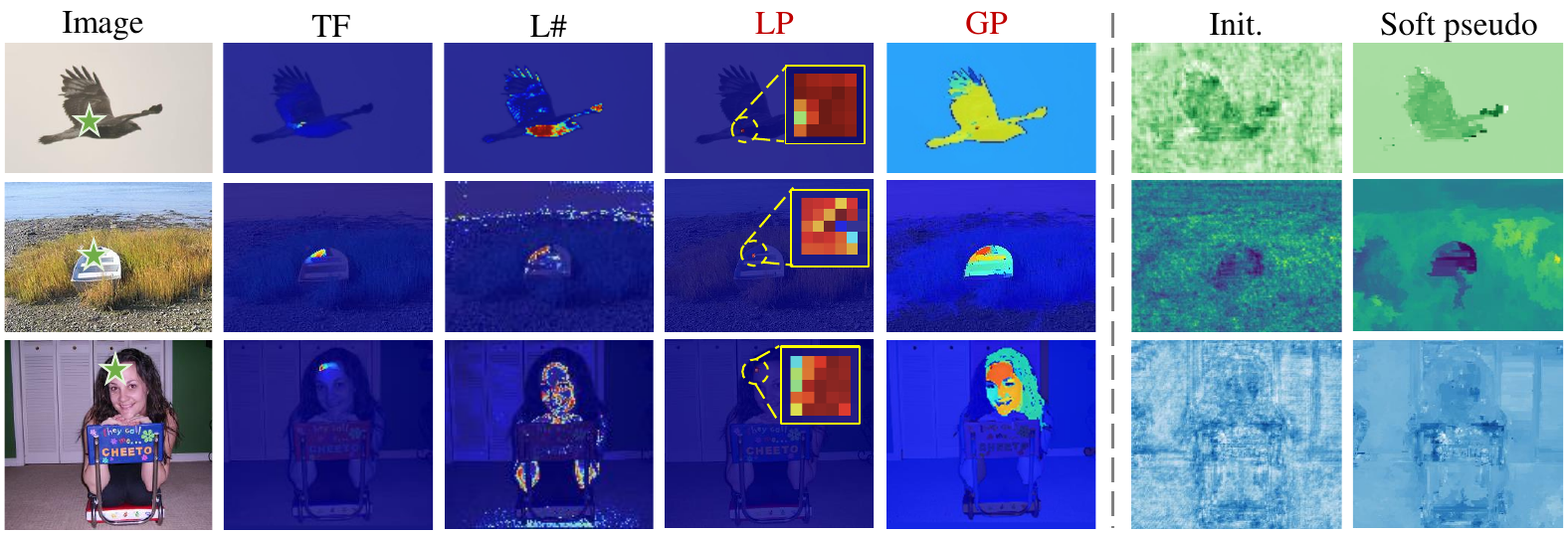} \end{center}\
    \setlength{\abovecaptionskip}{-0.3cm}
    \caption{Visual comparisons of pairwise affinity maps based on the RGB image for a specific position (green star), including TreeFilter (TF)~\cite{nips2019treev1}, local kernels with full image size (L\#), and our presented LP and GP processes. The  GP process can capture the long-range pairwise affinity with object's topology, while LP retrieves the local similarities.  Our \texttt{APro} approach smooths the noisy initial network predictions (init.) to obtain cleaner soft pseudo labels.}
    \vspace{-15pt}
 \label{fig:compare}
 \end{figure}




\subsection{CLIP-guided Semantic Segmentation}\label{clip-sup}
\textbf{Datasets.} To more comprehensively evaluate our proposed approach, we conduct experiments on \textit{CLIP-guided annotation-free semantic segmentation} with three widely used datasets:
\begin{fullitemize}
\item
Pascal VOC 2012~\cite{pascalvoc2010} introduced in Section~\ref{weak-semantic}.
\end{fullitemize}

\begin{itemize}[leftmargin=*]
	\setlength{\itemsep}{0pt}
	\setlength{\parsep}{-2pt}
	\setlength{\parskip}{-0pt}
	\setlength{\leftmargin}{-10pt}
	\vspace{-6pt}
  \item Pascal Context~\cite{mottaghi2014role}, which contains 59 foreground classes and a background class with 4,996 \texttt{train} images and 5,104 \texttt{val} images.
  \vspace{-6pt}
\end{itemize}

\begin{itemize}[leftmargin=*]
	\setlength{\itemsep}{0pt}
	\setlength{\parsep}{-2pt}
	\setlength{\parskip}{-0pt}
	\setlength{\leftmargin}{-10pt}
	\vspace{-6pt}
  \item COCO-Stuff~\cite{caesar2018coco}, which has 171 common semantic object/stuff classes on 164K images, containing 118,287 \texttt{train} images and 5,000 \texttt{val} images.
  \vspace{-8pt}
\end{itemize}



\textbf{Base Architectures and Backbones.} We employ MaskCLIP+~\cite{zhou2022extract} as our base architecture, which leverages the semantic priors of pretrained CLIP~\cite{radford2021learning} model to achieve the annotation-free dense semantic segmentation. 
In the experiments, we couple MaskCLIP+ with our \texttt{APro} approach under ResNet-50, ResNet-50$\times$16 and ViT-B/16~\cite{dosovitskiy2020vit}. 
The dense semantic predictions of MaskCLIP~\cite{zhou2022extract} are used as the unary term, and our proposed method can refine it and generate more accurate pseudo labels for training target networks. 

\textbf{Implementation Details.} For fair comparison, we keep the same settings as MaskCLIP+~\cite{zhou2022extract}. 
We keep the text encoder of CLIP unchanged and take prompts with target classes as the input. For text embedding, we feed prompt engineered texts into the text encoder of CLIP with 85 prompt templates, and average the results with the same class. 
For ViT-B/16, the bicubic interpolation is adopted for the pretrained positional embeddings. The initial learning rate is  set to 10$^{-4}$.  We train all models with batch size 32 and 2k/4k/8k iterations. DeepLabv2-ResNet101 is used as the backbone.

\setlength\intextsep{0pt}
\begin{wraptable}{r}{0.68\linewidth}
	\centering
	\setlength{\abovecaptionskip}{0cm}
    \captionsetup{width=.65\textwidth}
    \caption{Quantitative results (\S\ref{clip-sup}) on Pascal VOC2012~\cite{pascalvoc2010}~\texttt{val}, Pascal Context~\cite{mottaghi2014role} \texttt{val}, and COCO-Stuff~\cite{caesar2018coco} \texttt{val} with mean IoU (\%).}
    \vspace{0.1em}
    \label{tab:maskclip-sota}
    {\hspace{-1.3ex}
    \resizebox{0.65\textwidth}{!}{
    \setlength\tabcolsep{2pt}
    \renewcommand\arraystretch{1.02}
    \begin{tabular}{r||c||c||c||c}
      \hline\thickhline
      \rowcolor{mygray}
      \multicolumn{1}{c||}{Method} & \multicolumn{1}{c||}{CLIP Model} & \multicolumn{1}{c||}{ \textbf{VOC2012} } & \multicolumn{1}{c||}{ \ \textbf{Context} } & \multicolumn{1}{c}{\!\!\!\textbf{COCO.}\!\!\!} \\\hline\hline
      {MaskCLIP+~\pub{ECCV22}}{~\cite{zhou2022extract}}  &  &  58.0  &  23.9 &  13.6   \\
      \texttt{APro}(\texttt{Ours})                & \multirow{-2}{*}{{ResNet-$\text{50}$}} &  \reshl{61.6}{3.6} &   \reshl{25.4}{1.5} &   \reshl{14.6}{1.0}  \\
      \cdashline{1-5}[1pt/1pt]
      {MaskCLIP+~\pub{ECCV22}}{~\cite{zhou2022extract}}  &  &  67.5  &  25.2 &  17.3   \\
      \texttt{APro}(\texttt{Ours})                 & \multirow{-2}{*}{{ResNet-$\text{50}$$\times$16}} &  \reshl{70.4}{2.9} & \reshl{26.5}{1.3}  &   \reshl{18.2}{0.9}  \\
      \cdashline{1-5}[1pt/1pt]
      {MaskCLIP+~\pub{ECCV22}}{~\cite{zhou2022extract}}           &                                                             & 73.6 & 31.1 & 18.0 \\
      \texttt{APro}(\texttt{Ours})                 & \multirow{-2}{*}{{ViT-B/16}} & \reshl{75.1}{1.5} & \reshl{32.6}{1.5} & \reshl{19.5}{1.5} \\
      \hline
    \end{tabular}
    }
    }
\end{wraptable}
\textbf{Quantitative Results.} Table~\ref{tab:maskclip-sota} compares our approach with MaskCLIP+~\cite{zhou2022extract} for annotaion-free semantic segmentation. We have the following observations. 
\begin{itemize}[leftmargin=*]
	\setlength{\itemsep}{0pt}
	\setlength{\parsep}{-2pt}
	\setlength{\parskip}{-0pt}
	\setlength{\leftmargin}{-10pt}
	\vspace{-6pt}
  \item Pascal VOC2012~\cite{pascalvoc2010} \texttt{val}. With ResNet-50 as the image encoder in pretrained CLIP model, our approach outperforms MaskCLIP+ by 3.6\% mIoU. With ResNet-50$\times$16 and ViT-B/16 as the image encoders, our method surpasses MaskCLIP+ by 2.9\% and 1.5\% mIoU, respectively. 
  \vspace{-8pt}
\end{itemize}

\begin{itemize}[leftmargin=*]
	\setlength{\itemsep}{0pt}
	\setlength{\parsep}{-2pt}
	\setlength{\parskip}{-0pt}
	\setlength{\leftmargin}{-10pt}
	\vspace{-6pt}
  \item Pascal Context~\cite{mottaghi2014role} 
  \texttt{val}. Our proposed method outperforms MaskCLIP+ consistently with different image encoders (about +1.5\% mIoU).
  \vspace{-8pt}
\end{itemize}

\begin{itemize}[leftmargin=*]
	\setlength{\itemsep}{0pt}
	\setlength{\parsep}{-2pt}
	\setlength{\parskip}{-0pt}
	\setlength{\leftmargin}{-10pt}
	\vspace{-6pt}
  \item COCO-Stuff~\cite{caesar2018coco} \texttt{val}. COCO-Stuff consists of hundreds of semantic categories. Our method still brings +1.0\%, +0.9\% and +1.5\% performance gains over MaskCLIP+ with ResNet-50, ResNet-50$\times$16 and ViT-B/16 image encoders, respectively. 
  \vspace{-8pt}
\end{itemize}

\begin{figure}[t]
\begin{center}
\includegraphics[width=0.99\linewidth]{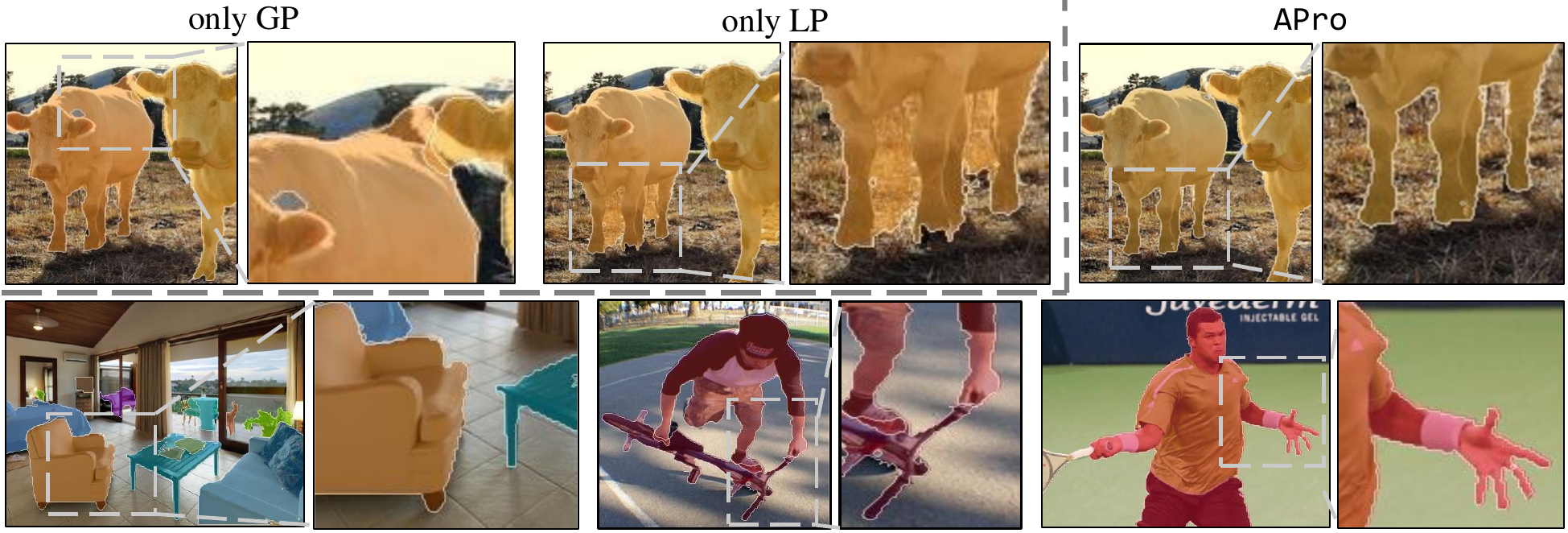} \end{center}
   \vspace{-9pt}
    \caption{Qualitative results with different pairwise affinity terms  on weakly supervised instance segmentation. Our method only with GP process preserves details without local consistency, while the model with only LP process encourages the local smoothness without topology-wise details. Our \texttt{APro} approach yields high-quality predictions with fine-grained details.}
 \label{fig:vis}
 \vspace{-14pt}
 \end{figure}

\subsection{Diagnostic Experiments}\label{sec:ablation}

For in-depth analysis, we conduct ablation studies on Pascal VOC~\cite{pascalvoc2010} upon the weakly box-supervised instance segmentation task. 

\begin{wraptable}{r}{0.52\linewidth}
	\centering
 \vspace{-0.2em}
	\setlength{\abovecaptionskip}{0cm}
    \captionsetup{width=.52\textwidth}
    \caption{Effects of unary and pairwise terms.}
    \label{tab:unary_pairwise}
    {\hspace{-1.5ex}
    \resizebox{0.52\textwidth}{!}{
    \setlength\tabcolsep{4pt}
    \renewcommand\arraystretch{1.0}
    \begin{tabular}{ccc||ccc}
        \hline\thickhline
        \rowcolor{mygray}
           Unary & Global Pairwise & Local  Pairwise & AP & AP$_{50}$ & AP$_{75}$  \\ 
        \hline\hline
        \cmark &  & &  25.9  & 57.0 & 20.4  \\
        \cmark  & \cmark & & 36.3 & 63.9 & 37.0  \\
        \cmark  &       &\cmark &  36.0 & 64.3 & 35.6 \\
         \cmark  &  \cmark     &\cmark & \textbf{38.4}  & \textbf{65.4} & \textbf{39.8}  \\
        \hline
    \end{tabular}
    }
    }
\end{wraptable}
\textbf{Unary and Pairwise Terms.}
Table~\ref{tab:unary_pairwise} shows the evaluation results with different unary and pairwise terms. 
When using the unary term only, our method achieves 25.9\% AP. 
When the global pairwise term is employed, our method achieves a much better performance of 36.3\% AP. Using the local pairwise term only, our method obtains 36.0\% AP.
When both the global and local pairwise terms are adopted, our method achieves the best performance of 38.4\% AP.


\begin{wraptable}{r}{0.50\linewidth}
	\centering
 \vspace{-0.2em}
	\setlength{\abovecaptionskip}{0cm}
    \captionsetup{width=.50\textwidth}
    \caption{Comparisons with tree-based methods.}
    \label{tab:tree}
    {\hspace{-1.5ex}
    \resizebox{0.50\textwidth}{!}{
    \setlength\tabcolsep{4pt}
    \renewcommand\arraystretch{1.0}
    \begin{tabular}{c||ccc}
        \hline\thickhline
        \rowcolor{mygray}
        Method &  AP & AP$_{50}$ & AP$_{75}$ \\ 
        \hline\hline
       TreeFilter~\cite{nips2019treev1} & 36.1 & 63.5 & 36.1  \\
       TreeFilter~\cite{nips2019treev1} + Local Pairwise  & 36.8 & 64.4 & 36.5  \\
       Global + Local Pairwise (\texttt{Ours}) &  \textbf{38.4} & \textbf{65.4} & \textbf{39.8} \\
        \hline
    \end{tabular}
    }
    }
\end{wraptable}
\textbf{Tree-based Long-range Affinity Modeling.}
The previous works~\cite{nips2019treev1,liang2022tree} explore tree-based filters for pairwise relationship modeling.  Table~\ref{tab:tree} compares our method with them.
TreeFilter can capture the relationship with distant nodes to a certain extent (see Fig.~\ref{fig:compare}). Directly using TreeFilter as the pairwise term leads 
to 36.1\% AP.  By combining TreeFilter with our local pairwise term, the model obtains 36.8\%AP. 
In comparison, our proposed approach achieves 38.4\% AP.

\begin{wraptable}{r}{0.32\linewidth}
	\centering
   \vspace{-0.2em}
	\setlength{\abovecaptionskip}{0cm}
    \captionsetup{width=.32\textwidth}
    \caption{Comparisons on local pairwise affinity modeling.}
    \label{tab:local}
    {\hspace{-1.5ex}
    \resizebox{0.32\textwidth}{!}{
    \setlength\tabcolsep{4pt}
    \renewcommand\arraystretch{1.0}
    \begin{tabular}{cc||cc}
        \hline\thickhline
        \rowcolor{mygray}
        \multicolumn{2}{c||}{LP(\texttt{Ours})} & \multicolumn{2}{c} {MeanField\cite{CVPR2023MAL}} \\
        \rowcolor{mygray}
          Iteration  & AP  & Iteration & AP \\
        \hline\hline
          10  & 35.8  &  20 & 35.2 \\ 
          20  & \textbf{36.0}  & 30  & 35.5 \\ 
          30  & 35.7  &  50  & 35.5  \\
          50  & 35.6 & 100 & \textbf{35.9} \\
        \hline
    \end{tabular}
    }
    }
\end{wraptable}
\textbf{Iterated Local Affinity Modeling.}
We evaluate our local affinity propagation (LP) with different iterations, and compare it with the classical MeanField  method~\cite{CVPR2023MAL,nips2011crf}.  Table~\ref{tab:local} reports the comparison results.
Our \texttt{APro} with the LP process achieves 36.0\% AP after 20 iterations. However, replacing our local affinity propagation with MeanFiled-based method~\cite{CVPR2023MAL} costs 100 iterations to obtain 35.9\% AP. This indicates that our LP method possesses the attribute of fast convergence.

\begin{wraptable}{r}{0.31\linewidth}
	\centering
        \vspace{-0.2em}
	\setlength{\abovecaptionskip}{0cm}
    \captionsetup{width=.31\textwidth}
    \caption{Generation of soft pseudo labels.}
    \label{tab:manner}
    {\hspace{-1.5ex}
    \resizebox{0.31\textwidth}{!}{
    \setlength\tabcolsep{4pt}
    \renewcommand\arraystretch{1.0}
    \begin{tabular}{c||ccc}
        \hline\thickhline
        \rowcolor{mygray}
        Method &  AP & AP$_{50}$ & AP$_{75}$ \\ 
        \hline\hline
       GP-LP-C & 36.8  & 63.7 & 37.8  \\
       LP-GP-C  & 37.7 & 65.1 & 39.1  \\
       GP-LP-P &  \textbf{38.4} & \textbf{65.4} & \textbf{39.8} \\
        \hline
    \end{tabular}
    }
    }
\end{wraptable}
\textbf{Soft Pseudo-label Generation.} 
With the formulated GP and LP methods, we study how to integrate them to generate the soft pseudo-labels in Table~\ref{tab:manner}. 
We can cascade GP and LP sequentially  to refine the pseudo labels. Putting GP before LP  (denoted as GP-LP-C) achieves  36.8\% AP, and putting LP before GP (denoted as LP-GP-C) performs better with 37.7\% AP. 
In addition, we can use GP and LP in parallel (denoted as
GP-LP-P) to produce two pseudo labels, and employ both of them to optimize the segmentation network with $L_1$ distance.
Notably, GP-LP-P achieves the best performance with 38.4\% mask AP. 
This indicates that our proposed affinity propagation in global and local views are complementary for optimizing the segmentation network.

\begin{wraptable}{r}{0.28\linewidth}
	\centering
 \vspace{-0.2em}
	\setlength{\abovecaptionskip}{0cm}
    \captionsetup{width=.28\textwidth}
    \caption{Average runtime (ms) with and without the efficient implementation.}
    \label{tab:runtime}
    {\hspace{-1.5ex}
    \resizebox{0.28\textwidth}{!}{
    \setlength\tabcolsep{4pt}
    \renewcommand\arraystretch{1.0}
    \begin{tabular}{c||c}
        \hline\thickhline
        \rowcolor{mygray}
           Effic. Imple. & Ave. Runtime \\
        \hline\hline
        \xmark  &  4.3$\times$10$^3$  \\
        \cmark &  0.8 \\
        \hline
    \end{tabular}
    }
    }
\end{wraptable}
\textbf{Runtime Analysis.} We report the average runtime of our method in Table~\ref{tab:runtime}.
The experiment is conducted on a single GeForce RTX 3090 with batch size 1. Here we report the average runtime for one GP process duration of an epoch on the Pascal VOC dataset.
When directly using Breadth First Search for each node with $N$ times, the runtime is 4.3$\times$10$^3$  ms with $\mathcal{O}(N^2)$ time complexity. While employing the proposed efficient implementation, the runtime is only 0.8 ms with $\mathcal{O}(N\log N)$ time complexity. This demonstrates that the proposed efficient implementation reduces the computational complexity dramatically.

\section{Conclusion}
In this work, we proposed a novel universal component for weakly-supervised segmentation by formulating it as an affinity propagation process. A global and a local pairwise affinity term were introduced to generate the accurate soft pseudo labels. An efficient implementation with the light computational overhead was developed.
The proposed approach, termed as \texttt{APro}, can be embedded into the existing segmentation networks for label-efficient segmentation. Experiments on three typical label-efficient segmentation tasks, \textit{i.e.}, box-supervised instance segmentation, point/scribble-supervised semantic segmentation and CLIP-guided annotation-free semantic segmentation, proved the effectiveness of proposed method.

\section*{Acknowledgments}
This work is supported by National Natural Science Foundation of China under Grants (61831015). It is also supported by the Information Technology Center and State Key Lab of CAD\&CG, Zhejiang University.

{\small
\bibliographystyle{unsrt}
\bibliography{reference}

\begin{thebibliography}{10}

\bibitem{shen2023survey}
Wei Shen, Zelin Peng, Xuehui Wang, Huayu Wang, Jiazhong Cen, Dongsheng Jiang,
  Lingxi Xie, Xiaokang Yang, and Q~Tian.
\newblock A survey on label-efficient deep image segmentation: Bridging the gap
  between weak supervision and dense prediction.
\newblock {\em TPAMI}, 2023.

\bibitem{bearman2016s}
Amy Bearman, Olga Russakovsky, Vittorio Ferrari, and Li~Fei-Fei.
\newblock What’s the point: Semantic segmentation with point supervision.
\newblock In {\em ECCV}, pages 549--565, 2016.

\bibitem{fan2022pointly}
Junsong Fan, Zhaoxiang Zhang, and Tieniu Tan.
\newblock Pointly-supervised panoptic segmentation.
\newblock In {\em ECCV}, pages 319--336, 2022.

\bibitem{li2023point2mask}
Wentong Li, Yuqian Yuan, Song Wang, Jianke Zhu, Jianshu Li, Jian Liu, and Lei
  Zhang.
\newblock Point2mask: Point-supervised panoptic segmentation via optimal
  transport.
\newblock In {\em ICCV}, pages 572--581, 2023.

\bibitem{lin2016scribblesup}
Di~Lin, Jifeng Dai, Jiaya Jia, Kaiming He, and Jian Sun.
\newblock Scribblesup: Scribble-supervised convolutional networks for semantic
  segmentation.
\newblock In {\em CVPR}, pages 3159--3167, 2016.

\bibitem{tang2018regularized}
Meng Tang, Federico Perazzi, Abdelaziz Djelouah, Ismail Ben~Ayed, Christopher
  Schroers, and Yuri Boykov.
\newblock On regularized losses for weakly-supervised cnn segmentation.
\newblock In {\em ECCV}, pages 507--522, 2018.

\bibitem{liang2022tree}
Zhiyuan Liang, Tiancai Wang, Xiangyu Zhang, Jian Sun, and Jianbing Shen.
\newblock Tree energy loss: Towards sparsely annotated semantic segmentation.
\newblock In {\em CVPR}, pages 16907--16916, 2022.

\bibitem{cvpr2021boxinst}
Zhi Tian, Chunhua Shen, Xinlong Wang, and Hao Chen.
\newblock Boxinst: High-performance instance segmentation with box annotations.
\newblock In {\em CVPR}, pages 5443--5452, 2021.

\bibitem{li2022box}
Wentong Li, Wenyu Liu, Jianke Zhu, Miaomiao Cui, Xian-Sheng Hua, and Lei Zhang.
\newblock Box-supervised instance segmentation with level set evolution.
\newblock In {\em ECCV}, pages 1--18, 2022.

\bibitem{CVPR2023MAL}
Shiyi Lan, Xitong Yang, Zhiding Yu, Zuxuan Wu, Jose~M Alvarez, and Anima
  Anandkumar.
\newblock Vision transformers are good mask auto-labelers.
\newblock In {\em CVPR}, 2023.

\bibitem{li2022box2mask}
Wentong Li, Wenyu Liu, Jianke Zhu, Miaomiao Cui, Risheng Yu, Xiansheng Hua, and
  Lei Zhang.
\newblock Box2mask: Box-supervised instance segmentation via level-set
  evolution.
\newblock {\em arXiv preprint arXiv:2212.01579}, 2022.

\bibitem{cvpr2018learning}
Jiwoon Ahn and Suha Kwak.
\newblock Learning pixel-level semantic affinity with image-level supervision
  for weakly supervised semantic segmentation.
\newblock In {\em CVPR}, pages 4981--4990, 2018.

\bibitem{ahn2019weakly}
Jiwoon Ahn, Sunghyun Cho, and Suha Kwak.
\newblock Weakly supervised learning of instance segmentation with inter-pixel
  relations.
\newblock In {\em CVPR}, pages 2209--2218, 2019.

\bibitem{ru2022learning}
Lixiang Ru, Yibing Zhan, Baosheng Yu, and Bo~Du.
\newblock Learning affinity from attention: end-to-end weakly-supervised
  semantic segmentation with transformers.
\newblock In {\em CVPR}, pages 16846--16855, 2022.

\bibitem{li2022expansion}
Jinlong Li, Zequn Jie, Xu~Wang, Xiaolin Wei, and Lin Ma.
\newblock Expansion and shrinkage of localization for weakly-supervised
  semantic segmentation.
\newblock In {\em NeurIPS}, 2022.

\bibitem{nips2019-bbtp}
Cheng-Chun Hsu, Kuang-Jui Hsu, Chung-Chi Tsai, Yen-Yu Lin, and Yung-Yu Chuang.
\newblock Weakly supervised instance segmentation using the bounding box
  tightness prior.
\newblock In {\em NeurIPS}, volume~32, 2019.

\bibitem{iccv2021discobox}
Shiyi Lan, Zhiding Yu, Christopher Choy, Subhashree Radhakrishnan, Guilin Liu,
  Yuke Zhu, Larry~S Davis, and Anima Anandkumar.
\newblock Discobox: Weakly supervised instance segmentation and semantic
  correspondence from box supervision.
\newblock In {\em ICCV}, pages 3406--3416, 2021.

\bibitem{cvpr2017randomwalk}
Paul Vernaza and Manmohan Chandraker.
\newblock Learning random-walk label propagation for weakly-supervised semantic
  segmentation.
\newblock In {\em CVPR}, pages 7158--7166, 2017.

\bibitem{nips2019treev1}
Lin Song, Yanwei Li, Zeming Li, Gang Yu, Hongbin Sun, Jian Sun, and Nanning
  Zheng.
\newblock Learnable tree filter for structure-preserving feature transform.
\newblock In {\em NeurIPS}, volume~32, 2019.

\bibitem{kruskal1956shortest}
Joseph~B Kruskal.
\newblock On the shortest spanning subtree of a graph and the traveling
  salesman problem.
\newblock {\em Proceedings of the American Mathematical society}, 7(1):48--50,
  1956.

\bibitem{nips2020treev2}
Lin Song, Yanwei Li, Zhengkai Jiang, Zeming Li, Xiangyu Zhang, Hongbin Sun,
  Jian Sun, and Nanning Zheng.
\newblock Rethinking learnable tree filter for generic feature transform.
\newblock In {\em NeurIPS}, volume~33, pages 3991--4002, 2020.

\bibitem{zhu2003semi}
Xiaojin Zhu, Zoubin Ghahramani, and John~D Lafferty.
\newblock Semi-supervised learning using gaussian fields and harmonic
  functions.
\newblock In {\em ICML}, pages 912--919, 2003.

\bibitem{zhu2005semi}
Xiaojin Zhu.
\newblock Semi-supervised learning literature survey.
\newblock 2005.

\bibitem{vezhnevets2010towards}
Alexander Vezhnevets and Joachim~M Buhmann.
\newblock Towards weakly supervised semantic segmentation by means of multiple
  instance and multitask learning.
\newblock In {\em CVPR}, pages 3249--3256, 2010.

\bibitem{pathak2014fully}
Deepak Pathak, Evan Shelhamer, Jonathan Long, and Trevor Darrell.
\newblock Fully convolutional multi-class multiple instance learning.
\newblock {\em arXiv preprint arXiv:1412.7144}, 2014.

\bibitem{pathak2015constrained}
Deepak Pathak, Philipp Krahenbuhl, and Trevor Darrell.
\newblock Constrained convolutional neural networks for weakly supervised
  segmentation.
\newblock In {\em ICCV}, pages 1796--1804, 2015.

\bibitem{pinheiro2015image}
Pedro~O Pinheiro and Ronan Collobert.
\newblock From image-level to pixel-level labeling with convolutional networks.
\newblock In {\em CVPR}, pages 1713--1721, 2015.

\bibitem{papandreou2015weakly}
George Papandreou, Liang-Chieh Chen, Kevin~P Murphy, and Alan~L Yuille.
\newblock Weakly-and semi-supervised learning of a deep convolutional network
  for semantic image segmentation.
\newblock In {\em ICCV}, pages 1742--1750, 2015.

\bibitem{li2023sim}
Ruihuang Li, Chenhang He, Yabin Zhang, Shuai Li, Liyi Chen, and Lei Zhang.
\newblock Sim: Semantic-aware instance mask generation for box-supervised
  instance segmentation.
\newblock In {\em CVPR}, 2023.

\bibitem{cheng2022boxteacher}
Tianheng Cheng, Xinggang Wang, Shaoyu Chen, Qian Zhang, and Wenyu Liu.
\newblock Boxteacher: Exploring high-quality pseudo labels for weakly
  supervised instance segmentation.
\newblock In {\em CVPR}, 2023.

\bibitem{ICCV2023boxsnake}
Rui Yang, Lin Song, Yixiao Ge, and Xiu Li.
\newblock Boxsnake: Polygonal instance segmentation with box supervision.
\newblock In {\em ICCV}, 2023.

\bibitem{cheng2022pointly}
Bowen Cheng, Omkar Parkhi, and Alexander Kirillov.
\newblock Pointly-supervised instance segmentation.
\newblock In {\em CVPR}, pages 2617--2626, 2022.

\bibitem{tang2022active}
Chufeng Tang, Lingxi Xie, Gang Zhang, Xiaopeng Zhang, Qi~Tian, and Xiaolin Hu.
\newblock Active pointly-supervised instance segmentation.
\newblock In {\em ECCV}, pages 606--623, 2022.

\bibitem{chen2021seminar}
Hongjun Chen, Jinbao Wang, Hong~Cai Chen, Xiantong Zhen, Feng Zheng, Rongrong
  Ji, and Ling Shao.
\newblock Seminar learning for click-level weakly supervised semantic
  segmentation.
\newblock In {\em ICCV}, pages 6920--6929, 2021.

\bibitem{zhang2021affinity}
Bingfeng Zhang, Jimin Xiao, Jianbo Jiao, Yunchao Wei, and Yao Zhao.
\newblock Affinity attention graph neural network for weakly supervised
  semantic segmentation.
\newblock {\em TPAMI}, 44(11):8082--8096, 2021.

\bibitem{marin2019beyond}
Dmitrii Marin, Meng Tang, Ismail~Ben Ayed, and Yuri Boykov.
\newblock Beyond gradient descent for regularized segmentation losses.
\newblock In {\em CVPR}, pages 10187--10196, 2019.

\bibitem{liu2020leveraging}
Yun Liu, Yu-Huan Wu, Peisong Wen, Yujun Shi, Yu~Qiu, and Ming-Ming Cheng.
\newblock Leveraging instance-, image-and dataset-level information for weakly
  supervised instance segmentation.
\newblock {\em TPAMI}, 44(3):1415--1428, 2020.

\bibitem{radford2021learning}
Alec Radford, Jong~Wook Kim, Chris Hallacy, Aditya Ramesh, Gabriel Goh,
  Sandhini Agarwal, Girish Sastry, Amanda Askell, Pamela Mishkin, Jack Clark,
  et~al.
\newblock Learning transferable visual models from natural language
  supervision.
\newblock In {\em ICML}, pages 8748--8763, 2021.

\bibitem{lin2022clip}
Yuqi Lin, Minghao Chen, Wenxiao Wang, Boxi Wu, Ke~Li, Binbin Lin, Haifeng Liu,
  and Xiaofei He.
\newblock Clip is also an efficient segmenter: A text-driven approach for
  weakly supervised semantic segmentation.
\newblock In {\em CVPR}, 2023.

\bibitem{zhou2022extract}
Chong Zhou, Chen~Change Loy, and Bo~Dai.
\newblock Extract free dense labels from clip.
\newblock In {\em ECCV}, pages 696--712, 2022.

\bibitem{li2022fully}
Yanwei Li, Hengshuang Zhao, Xiaojuan Qi, Yukang Chen, Lu~Qi, Liwei Wang, Zeming
  Li, Jian Sun, and Jiaya Jia.
\newblock Fully convolutional networks for panoptic segmentation with
  point-based supervision.
\newblock {\em TPAMI}, 2022.

\bibitem{nips2011crf}
Philipp Kr{\"a}henb{\"u}hl and Vladlen Koltun.
\newblock Efficient inference in fully connected crfs with gaussian edge
  potentials.
\newblock In {\em NeurIPS}, volume~24, 2011.

\bibitem{chen2017deeplab}
Liang-Chieh Chen, George Papandreou, Iasonas Kokkinos, Kevin Murphy, and Alan~L
  Yuille.
\newblock Deeplab: Semantic image segmentation with deep convolutional nets,
  atrous convolution, and fully connected crfs.
\newblock {\em TPAMI}, 40(4):834--848, 2017.

\bibitem{zheng2015conditional}
Shuai Zheng, Sadeep Jayasumana, Bernardino Romera-Paredes, Vibhav Vineet,
  Zhizhong Su, Dalong Du, Chang Huang, and Philip~HS Torr.
\newblock Conditional random fields as recurrent neural networks.
\newblock In {\em ICCV}, pages 1529--1537, 2015.

\bibitem{obukhov2019gated}
Anton Obukhov, Stamatios Georgoulis, Dengxin Dai, and Luc Van~Gool.
\newblock Gated crf loss for weakly supervised semantic image segmentation.
\newblock In {\em NeurIPS}, 2019.

\bibitem{freeman2000learning}
William~T Freeman, Egon~C Pasztor, and Owen~T Carmichael.
\newblock Learning low-level vision.
\newblock {\em IJCV}, 40:25--47, 2000.

\bibitem{yang2014stereo}
Qingxiong Yang.
\newblock Stereo matching using tree filtering.
\newblock {\em IEEE TPAMI}, 37(4):834--846, 2014.

\bibitem{tarjan1979class}
Robert~Endre Tarjan.
\newblock A class of algorithms which require nonlinear time to maintain
  disjoint sets.
\newblock {\em Journal of computer and system sciences}, 18(2):110--127, 1979.

\bibitem{lin2014microsoft}
Tsung-Yi Lin, Michael Maire, Serge Belongie, James Hays, Pietro Perona, Deva
  Ramanan, Piotr Doll{\'a}r, and C~Lawrence Zitnick.
\newblock Microsoft coco: Common objects in context.
\newblock In {\em ECCV}, pages 740--755, 2014.

\bibitem{iccv2011SBDdataset}
Bharath Hariharan, Pablo Arbel{\'a}ez, Lubomir Bourdev, Subhransu Maji, and
  Jitendra Malik.
\newblock Semantic contours from inverse detectors.
\newblock In {\em ICCV}, pages 991--998, 2011.

\bibitem{pascalvoc2010}
Mark Everingham, Luc Van~Gool, Christopher~KI Williams, John Winn, and Andrew
  Zisserman.
\newblock The pascal visual object classes (voc) challenge.
\newblock {\em IJCV}, 88(2):303--338, 2010.

\bibitem{nips2020solov2}
Xinlong Wang, Rufeng Zhang, Tao Kong, Lei Li, and Chunhua Shen.
\newblock Solov2: Dynamic and fast instance segmentation.
\newblock In {\em NeurIPS}, volume~33, pages 17721--17732, 2020.

\bibitem{cvpr2022mask2former}
Bowen Cheng, Ishan Misra, Alexander~G Schwing, Alexander Kirillov, and Rohit
  Girdhar.
\newblock Masked-attention mask transformer for universal image segmentation.
\newblock In {\em CVPR}, pages 1290--1299, 2022.

\bibitem{he2016deep}
Kaiming He, Xiangyu Zhang, Shaoqing Ren, and Jian Sun.
\newblock Deep residual learning for image recognition.
\newblock In {\em CVPR}, pages 770--778, 2016.

\bibitem{liu2021swin}
Ze~Liu, Yutong Lin, Yue Cao, Han Hu, Yixuan Wei, Zheng Zhang, Stephen Lin, and
  Baining Guo.
\newblock Swin transformer: Hierarchical vision transformer using shifted
  windows.
\newblock In {\em ICCV}, pages 10012--10022, 2021.

\bibitem{chen2019mmdetection}
Kai Chen, Jiaqi Wang, Jiangmiao Pang, Yuhang Cao, Yu~Xiong, Xiaoxiao Li,
  Shuyang Sun, Wansen Feng, Ziwei Liu, Jiarui Xu, et~al.
\newblock Mmdetection: Open mmlab detection toolbox and benchmark.
\newblock {\em arXiv preprint arXiv:1906.07155}, 2019.

\bibitem{imagenet}
Olga Russakovsky, Jia Deng, Hao Su, Jonathan Krause, Sanjeev Satheesh, Sean Ma,
  Zhiheng Huang, Andrej Karpathy, Aditya Khosla, Michael Bernstein, et~al.
\newblock Imagenet: large scale visual recognition challenge.
\newblock {\em IJCV}, 115:211--252, 2015.

\bibitem{ghiasi2021simple}
Golnaz Ghiasi, Yin Cui, Aravind Srinivas, Rui Qian, Tsung-Yi Lin, Ekin~D Cubuk,
  Quoc~V Le, and Barret Zoph.
\newblock Simple copy-paste is a strong data augmentation method for instance
  segmentation.
\newblock In {\em CVPR}, pages 2918--2928, 2021.

\bibitem{tang2018normalized}
Meng Tang, Abdelaziz Djelouah, Federico Perazzi, Yuri Boykov, and Christopher
  Schroers.
\newblock Normalized cut loss for weakly-supervised cnn segmentation.
\newblock In {\em CVPR}, pages 1818--1827, 2018.

\bibitem{mottaghi2014role}
Roozbeh Mottaghi, Xianjie Chen, Xiaobai Liu, Nam-Gyu Cho, Seong-Whan Lee, Sanja
  Fidler, Raquel Urtasun, and Alan Yuille.
\newblock The role of context for object detection and semantic segmentation in
  the wild.
\newblock In {\em CVPR}, pages 891--898, 2014.

\bibitem{caesar2018coco}
Holger Caesar, Jasper Uijlings, and Vittorio Ferrari.
\newblock Coco-stuff: Thing and stuff classes in context.
\newblock In {\em CVPR}, 2018.

\bibitem{dosovitskiy2020vit}
Alexey Dosovitskiy, Lucas Beyer, Alexander Kolesnikov, Dirk Weissenborn,
  Xiaohua Zhai, Thomas Unterthiner, Mostafa Dehghani, Matthias Minderer, Georg
  Heigold, Sylvain Gelly, et~al.
\newblock An image is worth 16x16 words: Transformers for image recognition at
  scale.
\newblock In {\em ICLR}, 2020.

\bibitem{mmseg2020}
MMSegmentation Contributors.
\newblock {MMSegmentation}: Openmmlab semantic segmentation toolbox and
  benchmark.
\newblock \url{https://github.com/open-mmlab/mmsegmentation}, 2020.

\bibitem{openseg}
openseg.pytorch Contributors.
\newblock {openseg.pytorch}.
\newblock \url{https://github.com/openseg-group/openseg.pytorch}, 2020.

\bibitem{kirillov2023sam}
Alexander Kirillov, Eric Mintun, Nikhila Ravi, Hanzi Mao, Chloe Rolland, Laura
  Gustafson, Tete Xiao, Spencer Whitehead, Alexander~C Berg, Wan-Yen Lo, et~al.
\newblock Segment anything.
\newblock {\em arXiv preprint arXiv:2304.02643}, 2023.

\end{thebibliography}
}

\clearpage
\appendix
\section*{Supplementary Material}

\setcounter{figure}{0}
\setcounter{table}{0}

\renewcommand{\thefigure}{A\arabic{figure}}
\renewcommand{\thetable}{A\arabic{table}}

In this document, we provide more details, additional experimental results and discussions on our approach. The supplementary material is organized as follows:

\begin{itemize}[leftmargin=*]
	\setlength{\itemsep}{0pt}
	\setlength{\parsep}{-0pt}
	\setlength{\parskip}{-0pt}
	\setlength{\leftmargin}{-10pt}
	\vspace{-2pt}
  \item \S\ref{sec:proof}: more details on the efficient implementation;
  \item \S\ref{sec:graphical}: additional graphical illustration; 
  \item \S\ref{sec:comparison}: more performance comparisons;
  \item \S\ref{sec:vis}: additional visualization results;
  \item \S\ref{sec:discussion}: discussions.
  \vspace{-4pt}
\end{itemize}

\section{More Details on the Efficient Implementation.}\label{sec:proof}
In this section, we first present the proofs of our claims about transmission cost and lazy propagation in our proposed lazy update algorithm. Then, we provide the pseudo-code of the \textit{find} function in Algorithm 1 of the main paper. The symbols in this document follow the same definitions as the main paper.

\subsection{Proofs on Transmission Cost and Lazy Propagation}


\newtheorem{lemma}{Lemma}
\begin{lemma}
Given edge $\mathcal{E}_{(k,l)}$ in $\mathcal{G}_T$ with edge weight $w_{k,l}$, $\forall a \in \mathcal{U}_{k}$, $b \in \mathcal{U}_{l}$, the transmission cost between vertex a and b is $w_{k,l}$.
\end{lemma}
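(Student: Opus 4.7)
The plan is to exploit two structural facts: $\mathcal{G}_T$ is a tree, so the path between any two vertices is unique; and the algorithm processes edges in ascending order of weight, so every edge that has been used to form $\mathcal{U}_k$ or $\mathcal{U}_l$ prior to considering $\mathcal{E}_{(k,l)}$ has weight at most $w_{k,l}$. The transmission cost between $a$ and $b$ is, by definition, the maximum edge weight on the unique $a$--$b$ path in $\mathcal{G}_T$, so I only need to identify that path and bound its edge weights.

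First I would describe the $a$--$b$ path explicitly. At the moment $\mathcal{E}_{(k,l)}$ is about to be processed, $\mathcal{U}_k$ and $\mathcal{U}_l$ are distinct connected subtrees of $\mathcal{G}_T$ (they must be disjoint, otherwise adding $\mathcal{E}_{(k,l)}$ would create a cycle in $\mathcal{G}_T$, contradicting that $\mathcal{G}_T$ is a tree). Hence there is a unique path $P_a$ from $a$ to $k$ lying entirely inside $\mathcal{U}_k$, and a unique path $P_b$ from $l$ to $b$ lying entirely inside $\mathcal{U}_l$. Concatenating these with $\mathcal{E}_{(k,l)}$ gives a walk from $a$ to $b$ in $\mathcal{G}_T$; by the uniqueness of paths in a tree this walk is exactly the $a$--$b$ path.

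Next I would bound the weights. Every edge of $P_a$ is internal to $\mathcal{U}_k$ and was therefore consumed at some earlier iteration than the current one; since edges are sorted in ascending order of $\boldsymbol{w}$, each such edge has weight at most $w_{k,l}$. The same argument applies to $P_b$. Consequently
\begin{equation}
\max_{(p,q)\in P_a\cup\{\mathcal{E}_{(k,l)}\}\cup P_b} w_{p,q} \;=\; w_{k,l},
\end{equation}
which is precisely the transmission cost between $a$ and $b$.

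The main subtlety, rather than a genuine obstacle, is justifying that $\mathcal{U}_k$ and $\mathcal{U}_l$ at this step are induced connected subtrees of $\mathcal{G}_T$ whose internal edges all precede $w_{k,l}$ in the sorted order. This is immediate once one observes that the union-find operates only on edges of $\mathcal{G}_T$ processed so far and that no cycle can arise inside a tree; I would make this observation explicit at the start of the proof as an invariant, after which the rest of the argument reduces to the uniqueness-of-tree-path plus ascending-order bookkeeping described above.
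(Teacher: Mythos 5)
Your proof is correct and follows essentially the same route as the paper's: identify the unique tree path $a\,\text{--}\,k\,\text{--}\,l\,\text{--}\,b$ and bound its internal edge weights by $w_{k,l}$ using the ascending processing order. Your explicit justification that $\mathcal{U}_k$ and $\mathcal{U}_l$ are disjoint connected subtrees whose internal edges were processed earlier is a slightly more careful statement of what the paper leaves implicit, but it is not a different argument.
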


\begin{proof}

Since there are no  loops in the tree, the shortest path between any two vertices is unique. Therefore, there exists a path $a{-}k$ in  $\mathcal{U}_k$ that connects vertices $a$ and $k$, and a path $b{-}l$ that connects $b$ and $l$ in  $\mathcal{U}_l$. When connecting unions $\mathcal{U}_k$ and $\mathcal{U}_l$ through edge $\mathcal{E}_{(k,l)}$, 
there is exactly a single path connecting $a$ and $b$, denoted as $a{-}k{-}l{-}b$. As the weight $\boldsymbol{w}$ is sorted in ascending order, for any edge $\mathcal{E}_i$ with $w_i$ between $a{-}k$ in $\mathcal{U}_k$, we have $w_i \leq w_{k,l}$.
The same conclusion applies to $l{-}b$. Hence, the maximum weight in path $a{-}k{-}l{-}b$ is $w_{k,l}$.
Consequently, once $k$ and $l$ are connected, $w_{k,l}$ is equivalent to the transmission cost for all nodes within $\mathcal{U}_k$ and $\mathcal{U}_l$.

\end{proof}

\begin{lemma}
When connecting vertices $k$ and $l$, lazy tags $\mathcal{Z}(\delta )_{k^*}$ and $\mathcal{Z}(\delta )_{l^*}$ can be updated as follows:

\begin{equation}
\mathcal{Z}(\delta )_{k^*} = \mathcal{Z}(\delta )_{k^*}+
\begin{cases}
\text{exp}(-w_{k,l}/{\zeta_g}^2)S(\delta)_{l} & \mathcal{U}_{k}.\text{rank} > \mathcal{U}_{l}.\text{rank},\\
\text{exp}(-w_{k,l}/{\zeta_g}^2)S(\delta)_{l} - \mathcal{Z}(\delta )_{l^*}& \text{otherwise}.
\end{cases}
\end{equation}

\end{lemma}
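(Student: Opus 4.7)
The plan is to prove the lemma by introducing an invariant that encodes the partial aggregate and verifying that the proposed lazy-tag updates preserve it. Specifically, I will maintain that at every step of the algorithm, for each vertex $v$ sitting in a union with root $r$, the current partial aggregate $A(v) := \sum_{u \in \mathcal{U}_v} \delta_u \exp(-\mathcal{C}_{v,u}/{\zeta_g}^2)$ equals $\delta_v$ plus the sum of the lazy tags $\mathcal{Z}(\delta)_u$ at each vertex $u$ on the tree path from $r$ down to $v$. This invariant is trivial at initialization (singletons with zero tags) and is preserved under the path-compression \emph{downlink} step of \texttt{find}: when a descendant is re-parented to the root, the intermediate tag of the old parent is added to the descendant's own tag, exactly compensating for the loss of that vertex from the path sum.

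With this invariant in hand, I will use Lemma~1 to read off the target per-vertex change required by the merge step. Since every $a \in \mathcal{U}_k$ and every $b \in \mathcal{U}_l$ become connected by a tree path of maximum edge weight $w_{k,l}$, the aggregate $A(a)$ must increase by exactly $\exp(-w_{k,l}/{\zeta_g}^2)\,S(\delta)_l$ for every $a \in \mathcal{U}_k$, and analogously by $\exp(-w_{k,l}/{\zeta_g}^2)\,S(\delta)_k$ for every $b \in \mathcal{U}_l$. I then split on the union-by-rank decision. When $\mathcal{U}_k.\text{rank} > \mathcal{U}_l.\text{rank}$, the root $k^*$ is unchanged and no tree path inside $\mathcal{U}_k$ is modified, so producing a uniform additive increment on $\mathcal{U}_k$ requires precisely adding $\exp(-w_{k,l}/{\zeta_g}^2)\,S(\delta)_l$ to $\mathcal{Z}(\delta)_{k^*}$, which is the first branch. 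When $\mathcal{U}_k.\text{rank} \leq \mathcal{U}_l.\text{rank}$, $l^*$ becomes the new root and $k^*$ its child; the symmetric (higher-rank) update has already bumped $\mathcal{Z}(\delta)_{l^*}$ by $\exp(-w_{k,l}/{\zeta_g}^2)\,S(\delta)_k$, and this extra summand now lies on every root-to-$v$ path for $v \in \mathcal{U}_k$. Equating $A_{\text{new}}(v) = A_{\text{old}}(v) + \exp(-w_{k,l}/{\zeta_g}^2)\,S(\delta)_l$ and solving for $\mathcal{Z}(\delta)_{k^*}^{\text{new}}$ forces the second branch, with $\mathcal{Z}(\delta)_{l^*}$ on the right-hand side interpreted as its current (already-updated) value.

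The main obstacle will be pinning down the delicate bookkeeping around the two near-simultaneous updates. I need to argue cleanly that the higher-rank side is applied first, so that the reference to $\mathcal{Z}(\delta)_{l^*}$ in the second branch unambiguously denotes the post-update value, and to confirm that the path-sum invariant, together with the \texttt{find} downlink mechanism, continues to hold for every descendant of $k^*$ regardless of when a subsequent \texttt{find} happens to compress its path. Once this ordering convention and the path-sum invariant are made explicit, the verification of each branch reduces to a short direct algebraic check.
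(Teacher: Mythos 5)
Your proposal is correct and follows essentially the same route as the paper: represent the aggregate at each vertex as its own value plus the sum of lazy tags along the root-to-vertex path (the paper's $LProp$ identity), use Lemma~1 to read off the required uniform increment $\exp(-w_{k,l}/{\zeta_g}^2)S(\delta)_l$ on $\mathcal{U}_k$, and case-split on which root survives to solve for $\Delta\mathcal{Z}(\delta)_{k^*}$. Your explicit invariant (including its preservation under path-compression downlinking) and your insistence that the higher-rank side be updated first, so that the subtracted $\mathcal{Z}(\delta)_{l^*}$ denotes the post-update value, make precise two points the paper's proof leaves implicit.
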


\begin{proof}
Given $ a\in \mathcal{U}_k$, for $\forall b \in \mathcal{U}_l$, the transmission cost between $a$ and $b$ is $w_{k,l}$. We have:

\begin{equation}
    \Delta LProp(\delta)_a = \sum_{i \in \mathcal{U}_l} ( \text{exp}(-w_{k,l}/{\zeta_g}^2)\delta_i )= \text{exp}(-w_{k,l}/{\zeta_g}^2)S(\delta)_l.
\end{equation}

First, let $\mathcal{U}_{k}.\text{rank} > \mathcal{U}_{l}.\text{rank}$. When merging unions $\mathcal{U}_k$ and $\mathcal{U}_l$, we choose $k^*$ as the root node and let $l^*$ be its descendant. There is: 

\begin{equation}
    \Delta LProp(\delta)_a = \Delta \mathcal{Z}(\delta)_{k^*},
\end{equation}

\begin{equation}
    \therefore \Delta \mathcal{Z}(\delta)_{k^*} =  \text{exp}(-w_{k,l}/{\zeta_g}^2)S(\delta)_l.
\label{eq:tzm}
\end{equation}

Second, let $\mathcal{U}_{k}.\text{rank} \leq \mathcal{U}_{l}.\text{rank}$. When merging unions $\mathcal{U}_k$ and $\mathcal{U}_l$, we instead choose $l^*$ as the root node and let $k^*$ be its descendant. Then we have:

\begin{equation}
    \Delta LProp(\delta)_a = \mathcal{Z}(\delta)_{l^*} + \Delta \mathcal{Z}(\delta)_{k^*}  = \text{exp}(-w_{k,l}/{\zeta_g}^2)S(\delta)_l,
\end{equation}

\begin{equation}
    \therefore \Delta \mathcal{Z}(\delta)_{k^*} =  \text{exp}(-w_{k,l}/{\zeta_g}^2)S(\delta)_l - \mathcal{Z}(\delta)_{l^*}.
\end{equation}

\end{proof}

\subsection{Pseudo Code}
The pseudo-code of the \textit{find} function is shown in Algorithm~\ref{alg:find}, which finds the root rode with Path Compression.

\begin{algorithm}
\renewcommand{\thealgocf}{A1}
    \setstretch{1.1}
\caption{Pseudo-code of the \textit{find} function with Path Compression}
\label{alg:find}
\definecolor{codeblue}{rgb}{0.25,0.5,0.5}
\lstset{
    backgroundcolor=\color{white},
    basicstyle=\fontsize{8.5pt}{8pt}\ttfamily\selectfont,
    columns=fullflexible,
    breaklines=true,
    captionpos=b,
    escapeinside={(:}{:)},
    commentstyle=\fontsize{8.5pt}{8pt}\color{codeblue},
    keywordstyle=\fontsize{8.5pt}{8.5pt},
}
\begin{lstlisting}[language=C]
/*
fa: the parent of the i-th node, shape: (N)
tag: the lazy tag of numbers
ptag: the lazy tag of predictions
*/

(:\color{codetype}{\textbf{int}}:) (:\color{codefunc}{\textbf{find}}:)((:\color{codetype}{\textbf{int}}:) x){
/*
x: the node index to query
return: the root node of x
*/
    (:\color{codetype}{\textbf{int}}:) fx = (:\color{codevar}{\textbf{fa}}:)[x];
    (:\color{codedefine}{\textbf{if}}:)(fx == x)
        (:\color{codedefine}{\textbf{return}}:) x;
    
    (:\color{codevar}{\textbf{fa}}:)[x] = (:\color{codefunc}{\textbf{find}}:)(fx);      // Path Compression
    (:\color{codedefine}{\textbf{if}}:)((:\color{codevar}{\textbf{fa}}:)[x] != fx){
        (:\color{codevar}{\textbf{tag}}:)[x] += (:\color{codevar}{\textbf{tag}}:)[fx]; // Downlink lazy tag
        (:\color{codevar}{\textbf{ptag}}:)[x] += (:\color{codevar}{\textbf{ptag}}:)[fx];
    }
        
    (:\color{codedefine}{\textbf{return}}:) (:\color{codevar}{\textbf{fa}}:)[x];
}

\end{lstlisting}
\end{algorithm}



\section{Additional Graphical Illustration}\label{sec:graphical}
To facilitate a better comprehension, we provide a detailed graphical illustration in Fig.~\ref{fig:graph} to describe our global affinity propagation process. Initially, an input image is represented as a 4-connected planar graph. Subsequently, the Minimum Spanning Tree (MST) is constructed based on the edge weights to obtain the tree-based graph $\mathcal{G}_T$. ${\psi_g}({x_i},{x_j})$ is calculated as $exp(-d)$, where $d$ is the maximum value along the path $E_{i,j}$ from node $x_i$ to node $x_j$. 
This pairwise similarity ${\psi_g}({x_i},{x_j})$ is then multiplied by the unary term to obtain soft pseudo predictions.

Note that Fig.~\ref{fig:graph} serves purely as a visual illustration of our method. In the implementation, it is unnecessary to compute as it explicitly. As detailed in Section 3.3 of main paper, we alternatively design a lazy propagation scheme to efficiently update these values.

\begin{figure}[H]
\begin{center}
\includegraphics[width=0.95\linewidth]{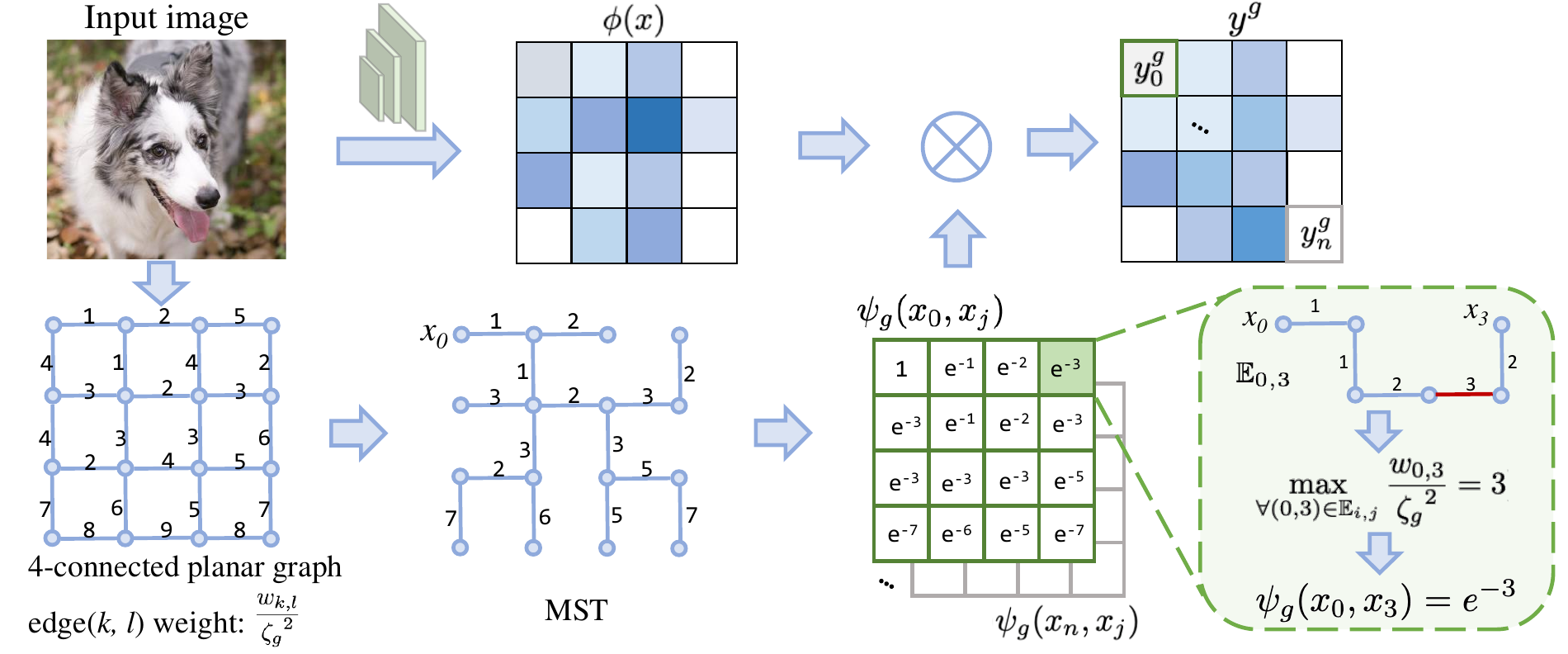} \end{center}
     \vspace{-5pt}
    \caption{The graphical illustration of the detailed process of global affinity propagation. In the green dashed box, we present the calculation of $\psi_g(x_0,x_3)$ as a simple example.}
 \label{fig:graph}
 \vspace{-14pt}
 \end{figure}

\section{More Performance Comparisons}\label{sec:comparison}
For annotation-free semantic segmentation with pretrained CLIP model, 
Key Smoothing (KS) proposed in MaskCLIP~\cite{zhou2022extract} also aims to realize the global affinity propagation. 
To better explore their efforts, we conduct detailed comparisons between KS and our \texttt{APro} method based on training-free MaskCLIP~\cite{zhou2022extract}. The experimental results are shown in Table~\ref{tab:maskclip-sota}.

\setlength\intextsep{0pt}
\begin{wraptable}{r}{0.52\linewidth}
	\centering
	\setlength{\abovecaptionskip}{0cm}
    \captionsetup{width=.52\textwidth}
    \caption{Quantitative results on Pascal Context~\cite{mottaghi2014role} \texttt{val} and COCO-Stuff~\cite{caesar2018coco} \texttt{val} with mean IoU (\%).}
    \vspace{0.1em}
    \label{tab:maskclip-sota}
    {\hspace{-1.3ex}
    \resizebox{0.52\textwidth}{!}{
    \setlength\tabcolsep{2pt}
    \renewcommand\arraystretch{1.02}
    \begin{tabular}{c||c||c||c}
      \hline\thickhline
      \rowcolor{mygray}
      \multicolumn{1}{c||}{Method} & \multicolumn{1}{c||}{CLIP Model} & \multicolumn{1}{c||}{ \ \textbf{Context} } & \multicolumn{1}{c}{ \textbf{COCO.}} \\\hline\hline
      MaskCLIP~\cite{zhou2022extract} &   &  18.46 &  10.17   \\
      +KS  &  & 21.0 & 12.42 \\
      +\texttt{APro}(\texttt{Ours})                & \multirow{-3}{*}{{ResNet-$\text{50}$}} & \textbf{21.67} &  \textbf{12.70}  \\
      \cdashline{1-4}[1pt/1pt]
      MaskCLIP~\cite{zhou2022extract}  &   &  21.57 &  13.55 \\
      +KS  &  & 22.65 & 15.50 \\
      +\texttt{APro}(\texttt{Ours})                 & \multirow{-3}{*}{{ResNet-$\text{50}$$\times$16}} &  \textbf{24.03}  & \textbf{16.30}  \\
      \cdashline{1-4}[1pt/1pt]
     MaskCLIP~\cite{zhou2022extract}           &                                                       & 21.68 & 12.51 \\
     +KS  &  & 23.87 & 13.79 \\
     +KS+PD  &  & 25.45 & 14.62 \\
     +\texttt{APro}(\texttt{Ours})  &  & 28.91 & 16.69 \\
     +\texttt{APro}(\texttt{Ours}) +PD                & \multirow{-5}{*}{{ViT-B/16}} & \textbf{29.42} & \textbf{16.71} \\
      \hline
    \end{tabular}
    }
    }
\end{wraptable}

Both KS and our \texttt{APro} method bring performance gains. Compared with KS, \texttt{APro} achieves better performance with different CLIP-based models. Especially, for ViT-B/16 model, our approach outperforms KS by +5.04\% mIoU on Pascal Context and +2.90\% mIoU on COCO, repectively. Equipped with Prompt Denoising (PD), the models could achieve further improvements.

We have the following further discussions: KS relies on the calculation of key feature similarities, which predominantly stems from high-level features of CLIP and computes pairwise terms within each pair of patches. Compared with KS of MaskCLIP, our method is built on a tree-based graph derived from low-level images, which is capable of capturing finer topological details. 

\section{Additional Visualization Results}\label{sec:vis}
To further show the performance of our proposed \texttt{APro} approach, we provide more visualization results.
Fig.~\ref{fig:box_voc} shows the qualitative comparisons with the state-of-the-art methods upon box-supervised instance segmentation task~\cite{cvpr2021boxinst,li2022box,li2022box2mask}.
It can be seen that our proposed \texttt{APro} approach is able to generate more accurate boundaries. 
For weakly-supervised semantic segmentation, we compare our method with the prior art TEL~\cite{liang2022tree} upon point-wise supervision in Fig.~\ref{fig:point_sem}. \texttt{APro} captures the fine-grained details of objects with the fitting boundaries. 
As for CLIP-guided annotation-free semantic segmentation, Fig.~\ref{fig:clip-guide} provides the comparison results with MaskCLIP+~\cite{zhou2022extract}. 
It can be observed that our approach eliminates the noisy predictions from the pretrained CLIP model effectively, achieving high-quality mask predictions. In addition, Fig.~\ref{fig:vis_coco} provides the qualitative results of our method on general COCO dataset.


\section{Discussions}\label{sec:discussion}
\textbf{Asset License and Consent.}
We use four image segmentation datasets, \textit{i.e.}, COCO~\cite{lin2014microsoft}, Pascal VOC 2012~\cite{pascalvoc2010}, COCO-Stuff~\cite{caesar2018coco} and Pascal Context~\cite{mottaghi2014role}, which are all publicly and freely available for academic research. 
We implement all models with MMDetection~\cite{chen2019mmdetection}, MMSegmentation~\cite{mmseg2020} and openseg.pytorch~\cite{openseg} codebases.
COCO (\href{https://cocodataset.org/}{\texttt{https://cocodataset.org/}}) is released under the
\href{https://creativecommons.org/licenses/by/4.0/legalcode}{CC BY 4.0}.
Pascal VOC 2012 (\href{http://host.robots.ox.ac.uk/pascal/VOC/voc2012/}{\texttt{http://host.robots.ox.ac.uk/pascal/VOC/voc2012/}}) is released under the \href{https://www.flickr.com/creativecommons/}{Flickr Terms of use}
for images.
COCO-Stuff v1.1 (\href{https://github.com/nightrome/cocostuff}{\texttt{https://github.com/nightrome/cocostuff}}) is released under the \href{https://www.flickr.com/creativecommons/}{Flickr Terms of use}
for images and the \href{https://cocodataset.org/#termsofuse}{CC BY 4.0} for annotations.
MMDetection (\href{https://github.com/open-mmlab/mmdetection}{\texttt{https://github.com/open-mmlab/mmdetection}}) and MMSegmentation (\href{https://github.com/open-mmlab/mmsegmentation}{\texttt{https://github.com/open-mmlab/mmsegmentation}}) codebases are released under the \href{apache.org/licenses/LICENSE-2.0}{Apache-2.0 license}. Openseg.pytorch (\href{https://github.com/openseg-group/openseg.pytorch}{\texttt{https://github.com/openseg-group/openseg. pytorch}}) codebase is  released under the \href{https://opensource.org/license/mit/}{MIT license}. 

\textbf{Limitations.} The presented  affinity propagation method is performed under the guidance of the similarities of image intensity and color.
Our proposed method may have difficulties in accurately capturing the pairwise affinities under the challenging scenarios like motion blur, occlusions, and cluttered scenes, \textit{etc}. Actually, this is a common problem for many segmentation methods.
In the future work, we will explore how to integrate our method into the large-scale foundation models, such as SAM~\cite{kirillov2023sam}, to take advantage of their strong features for more promising segmentation results.

\textbf{Broader Impact.}  This work presents an effective component for weakly-supervised segmentation with label-efficient annotations. We have demonstrated its effectiveness over three typical label-efficient segmentation tasks.
On the positive side, our approach has the potential to benefit a wide variety of real-world applications, such as autonomous vehicles, medical imaging, remote sensing and image editing, which can significantly reduce the labeling costs.
On the other side, erroneous predictions in real-world applications (\textit{i.e.}, medical imaging analysis and tasks involving autonomous vehicles) raise the safety issues of human beings. In order to avoid the potentially negative effects, we suggest to adopt a highly stringent security protocol in case that our approach 
fails to function properly in real-world applications.

\vspace{14pt}
\begin{figure}[H]
\begin{center}
\includegraphics[width=0.97\linewidth]{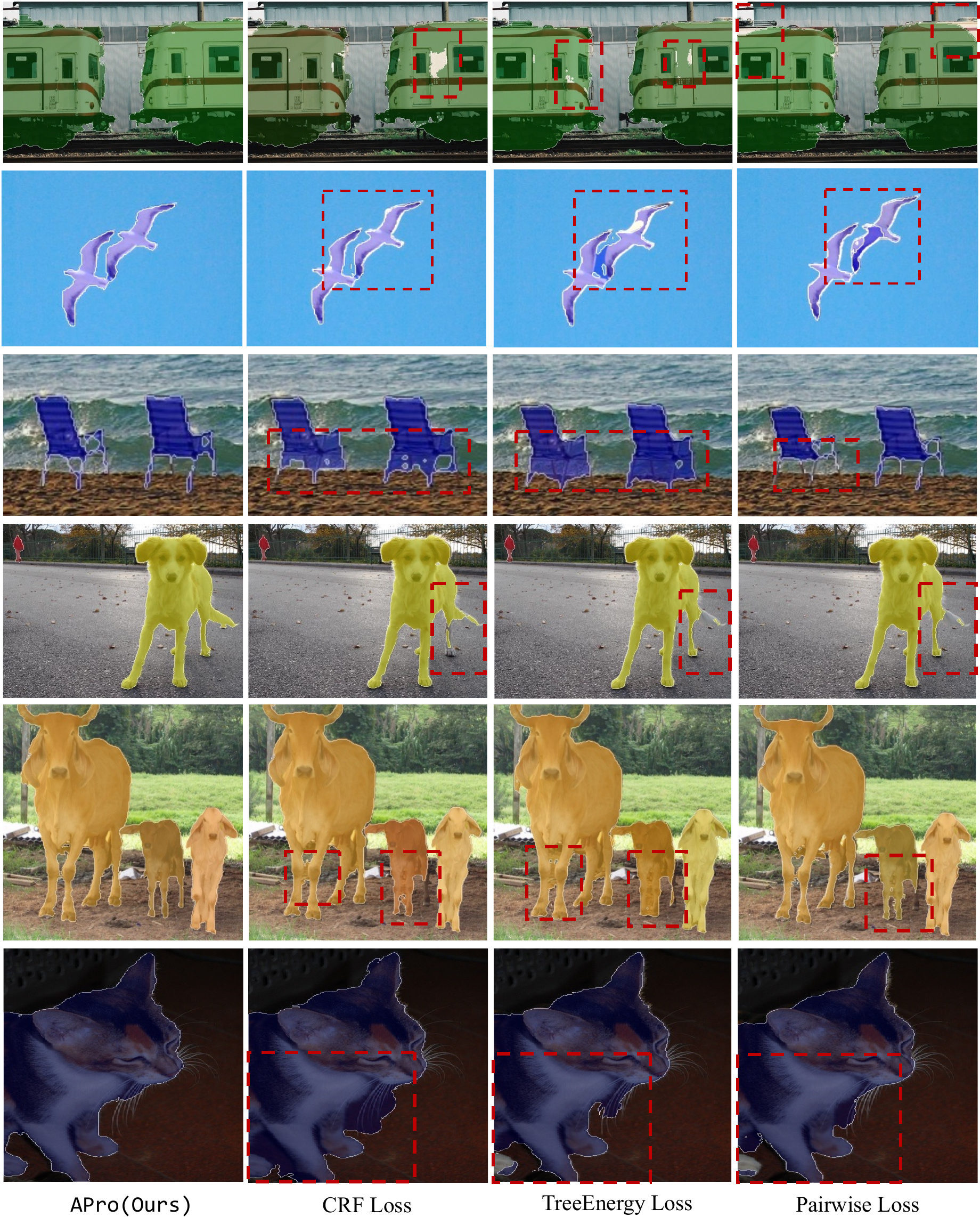} \end{center}
    \caption{Qualitative comparisons on Pascal VOC~\cite{pascalvoc2010}. We compare our \texttt{APro} approach with CRF loss~\cite{CVPR2023MAL}, TreeEnergy loss~\cite{liang2022tree} and Pairwise loss~\cite{cvpr2021boxinst} under the SOLOv2~\cite{nips2020solov2} framework. Our method obtains more fine-grained predictions with detail preserved.}
 \label{fig:box_voc}
 \end{figure}

\begin{figure}[t]
\begin{center}
\includegraphics[width=0.96\linewidth]{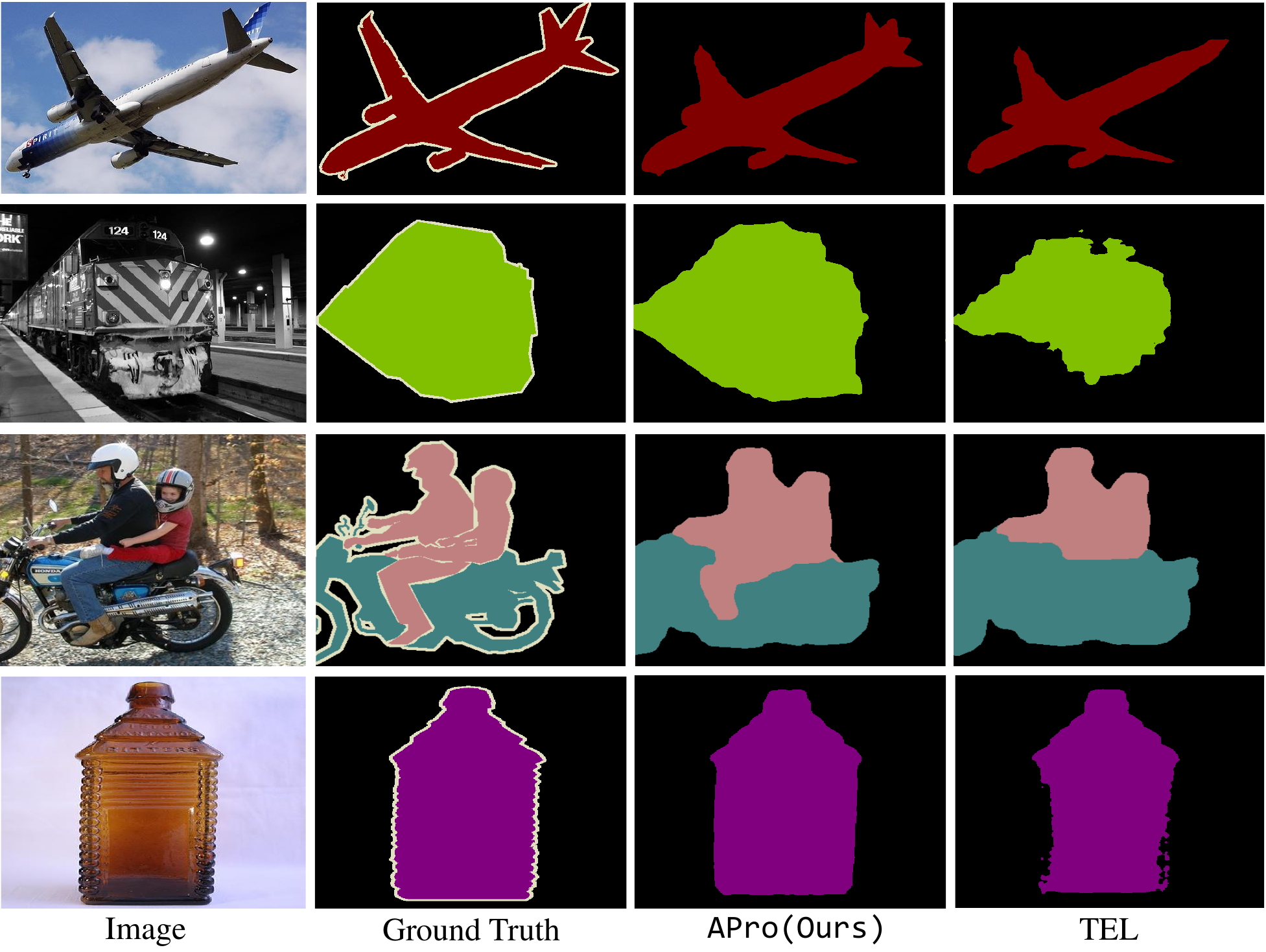} \end{center}
   \vspace{-8pt}
    \caption{Qualitative comparisons on point-supervised semantic segmentation. Compared with the state-of-the-art TEL~\cite{liang2022tree}, our method  segments objects with more accurate boundaries.}
 \label{fig:point_sem}
 \end{figure}

\vspace{-10pt}
\begin{figure}[t]
\begin{center}
\includegraphics[width=0.96\linewidth]{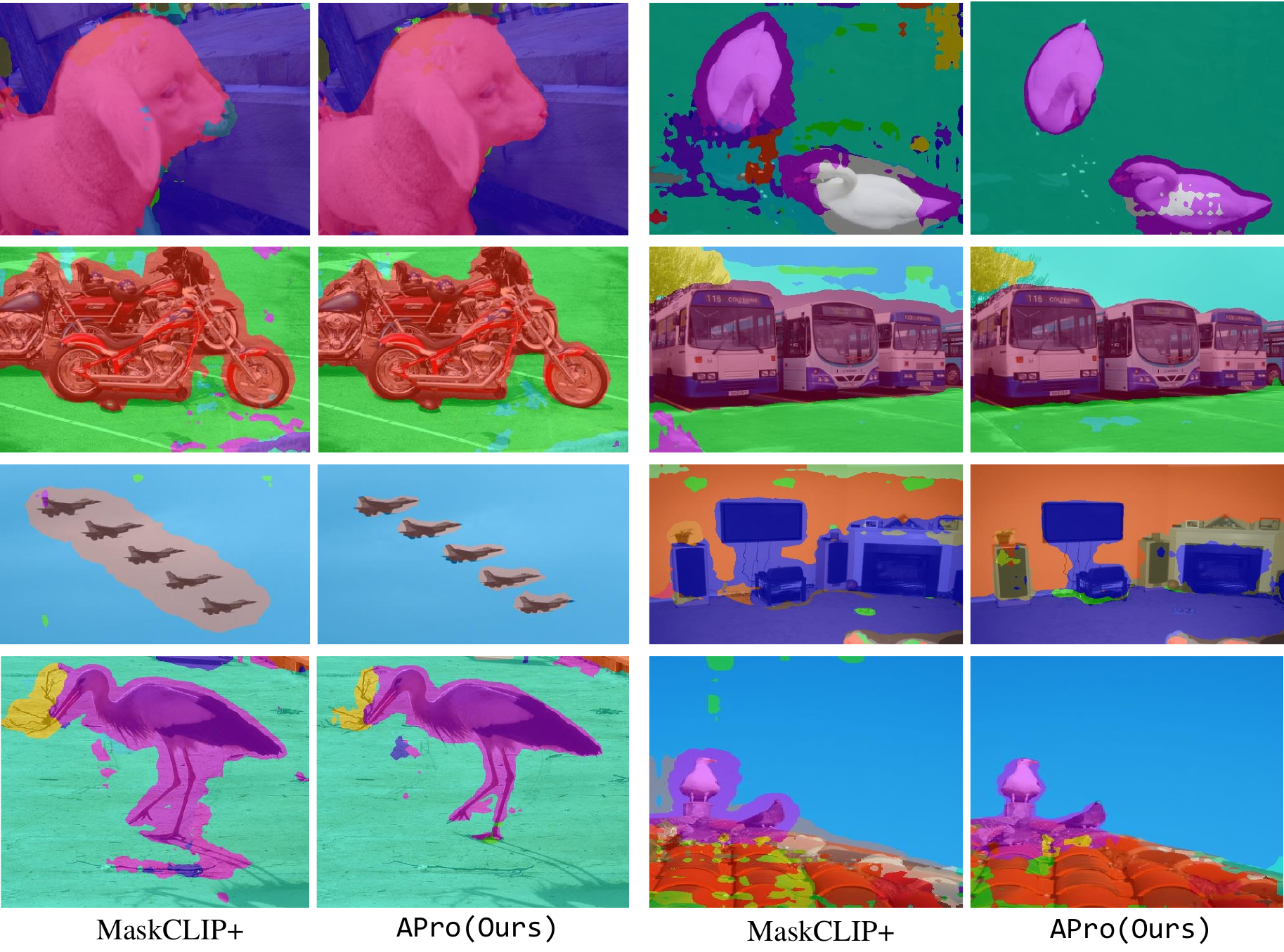} \end{center}
   \vspace{-8pt}
    \caption{Visual comparison results on Pascal Context with ViT-B/16 image encoder. Compared with the prior art MaskCLIP+~\cite{zhou2022extract}, our method obtains more accurate predictions with fitting boundaries.}
 \label{fig:clip-guide}
 \end{figure}

\begin{figure}[t]
\begin{center}
\includegraphics[width=0.99\linewidth]{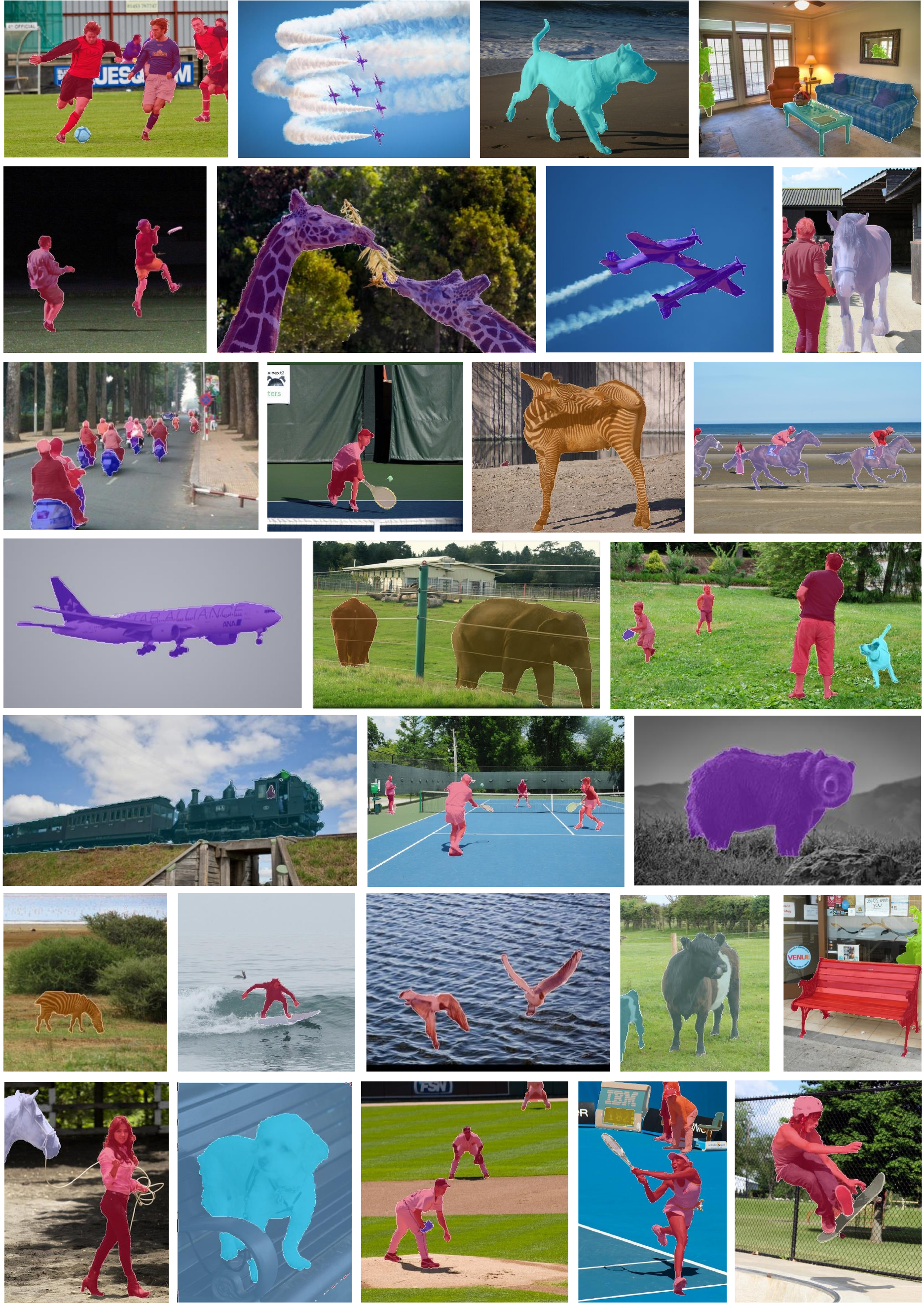} \end{center}
    \caption{Qualitative results of our \texttt{APro} on COCO with ResNet-101 under the SOLOv2 framework upon box-supervised instance segmentation.}
 \label{fig:vis_coco}
\end{figure}



\end{document}